\documentclass[12pt,A4]{article}
\usepackage[margin=1in,footskip=0.25in]{geometry}

\usepackage[comma]{natbib}
\usepackage{mathtools}
\usepackage{afterpage}
\usepackage{caption} 
\captionsetup[table]{skip=0pt}

\usepackage[utf8]{inputenc} % allow utf-8 input
\usepackage[T1]{fontenc}    % use 8-bit T1 fonts
\usepackage{hyperref}       % hyperlinks
\usepackage{url}            % simple URL typesetting
\usepackage{booktabs}       % professional-quality tables
\usepackage{amsfonts}       % blackboard math symbols
\usepackage{nicefrac}       % compact symbols for 1/2, etc.
\usepackage{microtype}      % microtypography
\usepackage{amsthm}
\usepackage{graphicx}
\usepackage{bm,color,paralist}
\usepackage{algorithm,algpseudocode}
\usepackage{times}
\usepackage{subfig,multirow}
\usepackage{enumitem}
\usepackage{epstopdf}
\usepackage{setspace}
\usepackage{hyperref}
\usepackage{mathrsfs}

%%%%%% KKUN

%\newtheorem{theorem}{Theorem}
%\newtheorem{lemma}{Lemma}

\newcommand{\X}{\mathbf{X}}
\newcommand{\D}{\mathbf{D}}
\newcommand{\cc}{\mathbf{c}}
\newcommand{\dd}{\mathbf{d}}

\usepackage{amsmath}
\usepackage{subfig}

\newcommand{\blank}[1]{}
\newcommand{\ie}{{i.e.}}

\newcommand*{\Scale}[2][4]{\scalebox{#1}{$#2$}}
\newtheorem{theorem}{Theorem}

\newtheorem{lemma}{Lemma}

\newtheorem{problem}{Problem}
\newtheorem{assumption}{Assumption}

\begin{document}
    
    \title{\bf Balance-Subsampled Stable Prediction}

    %% Group authors per affiliation:
    \author{Kun Kuang$^1$$^*$, Hengtao Zhang$^2$$^*$, Fei Wu$^1$, \\
    Yueting Zhuang$^1$, and Aijun Zhang$^2$$^{**}$\\
        {\normalsize $^1$College of Computer Science and Technology, Zhejiang University, Hangzhou, China}\\
        {\normalsize $^2$Department of Statistics and Actuarial Science, The University of Hong Kong,}\\
        {\normalsize Hong Kong, China}}
    
    \date{}
    % make the title area
    \maketitle
    \def\thefootnote{*}\footnotetext{These authors contributed equally to this work}\def\thefootnote{\arabic{footnote}}
    \def\thefootnote{**}\footnotetext{Corresponding author: ajzhang@hku.hk}\def\thefootnote{\arabic{footnote}}

    \begin{abstract}
   In machine learning, it is commonly assumed that training and test data share the same population distribution. However, this assumption is often violated in practice because the sample selection bias may induce the distribution shift from training data to test data. 
   Such a model-agnostic distribution shift usually leads to prediction instability across unknown test data. In this paper, we propose a novel  balance-subsampled stable prediction (BSSP) algorithm based on the theory of fractional factorial design. 
   It isolates the clear effect of each predictor from the confounding variables. 
   A design-theoretic analysis shows that  the proposed method can reduce the confounding effects among predictors induced by the distribution shift, hence improve both the accuracy of parameter estimation and prediction stability. 
   Numerical experiments on both synthetic and real-world data sets demonstrate that our  BSSP algorithm significantly outperforms the baseline methods for stable prediction across unknown test data.
    \vskip 6.5pt \noindent {\bf Keywords}: 
    Stable Prediction; Distribution Shift; Fractional Factorial Design; Subsampling; Regression; Classification.
    \end{abstract}
    
    \section{Introduction}
    One of the most common assumptions in learning algorithms is the homogeneity among training and test samples on which the algorithm is expected to make predictions. However, this condition is often violated in practice due to sample selection bias, which causes distribution shifts between observed data and the population.  
    Moreover, the unknown test distribution leads to an agnostic distribution shift problem.
    Therefore, it is highly demanding
    to develop predictive algorithms that are robust to
    the agnostic distribution shift between training and
    unknown test data.
    
    \blank{
    Recently, the invariant learning algorithms
    have been proposed to address the agnostic distribution shift problem, including domain generalization \cite{muandet2013domain}, invariant causal prediction \cite{peters2016causal} and causal transfer learning \cite{rojas2018invariant}.
    The motivation behind these methods is to explore the invariance, including invariant representation of data, the invariant structure between predictors and outcome variable, and causal structure across multiple training datasets.
    The performance of these methods usually depends on the diversity of multiple training datasets being considered, but they cannot address the distribution shift that does not appear in existing datasets.   
    Moreover, their training complexity grows exponentially with the dimension of the
    feature space in the worst case, which is not acceptable in practice.
    }
    
    In this paper, we assume that the underlying predictive mechanism between predictors/features $\mathbf{X}$ and outcome variable $Y$ is invariant across datasets. Based on the invariant predictive mechanism, all predictors $\mathbf{X}$ fall into one of two categories. One category includes stable features $\mathbf{S}$, which have causal effects on outcome $Y$, and are stable/invariant across datasets.
    For example in computer vision, ears, noses, and legs of dogs are stable features to recognize whether an image contains a dog or not.
    The other category includes noisy features $\mathbf{V}$, which have no causal effects on outcome, but might be highly correlated with either  stable features, outcome variable or both in certain data sets. For the same example, the grass and background pixels are noisy features for dog recognition.
    Hence, taking the regression task as an example,  we set $\mathbf{X} = \{\mathbf{S}, \mathbf{V}\}$ and have $Y = f(\mathbf{X}) + \epsilon = f(\mathbf{S})+\epsilon$ in our problem. 
    Conditional on the full set of stable features, the noisy features do not affect the expected outcome.
    However, the distribution shift might make a part of noisy features to become power predictors. In the previous example, grass would be a power predictor if most of the dogs in training data are on the grass.
    Different distributions shifts might appear in different datasets, leading to the variation of confounding and spurious correlation\footnote{Here, we call the correlation between noisy features and outcome variable as spurious correlation, since in the generation of the outcome variable, noisy features are supposed to be not correlated with the outcome.} between the noisy features and outcome variable. To address the stable prediction problem, we should reduce such confounding effects, hence removing the spurious correlations between noisy features and outcome variable.
    
    We assume no prior knowledge about which features are stable and which are noisy. Under such settings, one possible way is to isolate the impact of each individual feature for recovering the true correlation (causation) between predictors and the outcome variable. Variable balancing techniques are widely used for causation recovery in the literature of causal inference. The key idea is to construct sample weights by either employing propensity scores \citep{rosenbaum1983central,kuang2017treatment,austin2011introduction} or optimizing weights directly \citep{athey2018approximate,zubizarreta2015stable,hainmueller2012entropy,kuang2017estimating}.
    Recently, a global balancing algorithm \citep{kuang2018stable} was proposed to learn the weights that enforce all features to be as independent as possible, in order to isolate the impact of each feature.
    Despite its better performance, this algorithm only focuses on the confounding factors between any two variables, while ignoring the higher-order interactions. Moreover, it is not an efficient way by learning the weight for each sample and using full data to perform fitting, especially with huge training samples in big data scenarios.
    
    Full and fractional factorial designs are widely used in statistics for arranging factorial experiments without confounding effects \citep{box2005statistics,dey2009fractional}. Using data collected from a factorial designed experiment, one can easily isolate the impact of each feature and reveal the causation between predictors and the outcome variable. Inspired by the factorial designs of experiment, we propose a balance-subsampled stable prediction (BSSP) algorithm, which consists of a factorial design-based subsampling strategy for covariates balancing and a subsampled learning model for stable prediction. Using the factorial design, the subsampling strategy selects a subset of samples from training data such that the covariates are mutually balanced and thus deconfounded.
    Then, the model fitted by the subsamples would exploit the true correlations between predictors and outcome for stable prediction.
    Our algorithm has the overwhelming performance across unknown test data with a distribution shift from the training data, thus achieve a more stable prediction. Furthermore, we can train the model faster as the subdata is much smaller than the full data.
    
    In summary, the contributions of this paper are listed as follows:
    \begin{itemize}[leftmargin=0.7cm]
    %\item We investigate a problem of stable prediction across unknown test data, where the distribution of agnostic test data is changeable and might be very different from the training data. %It's a very practical problem.
    \item We propose a balance-subsampled stable prediction (BSSP) algorithm based on a fractional factorial design-based subsampling strategy for variable deconfounding.
    \item Theoretically we show that the fractional factorial design-based subsampling can remove the confounding effects with non-linear interactions. Hence, our BSSP algorithm can precisely estimate the parameters and achieve a stable prediction across unknown environments.
    \item We conduct extensive experiments on both synthetic and real-world datasets, and demonstrate the advantages of our algorithm for stable prediction in both regression and classification tasks.
    \end{itemize}

    \section{Related Work}
    
    Recently, the invariant learning methods have been proposed to explore the invariance across multiple training datasets and used for stable prediction on unknown testing data.
    \cite{peters2016causal} proposed an algorithm to exploit the invariance of a prediction under the causal model, and identify invariant features for causal prediction.
    \cite{rojas2018invariant} proposed to learn the invariant structure between predictors and the response variable by a causal transfer framework. 
    Similarly, domain generalization methods \citep{muandet2013domain} try to discover an invariant representation of data for prediction on unknown test data.
    The main drawback of these methods is that their performance highly depends on the diversity of the multiple training data being considered. Moreover, they cannot handle well the distribution shift that does not appear in the existing training data. %Moreover, their training complexity grows exponentially with the dimension of the feature space in the worst case, which is not acceptable in practice.
    
    \cite{kuang2018stable} proposed a global balancing algorithm for stable prediction. As shown in Eq. (\ref{eq:L_balancing}), the global balancing algorithm attempts to learn global sample weights for each sample such that all predictors may become independent. \cite{kuang2018stable} also proved that the ideal global sample weights can isolate the impact of each predictor, hence address the stable prediction problem. However, the algorithm in \cite{kuang2018stable} is non-convex and only focuses on the first-order confounding between any two variables, ignoring the higher-order interactions.
    %addresses low-order confounding effects. 
    
    In statistical designs of experiments, full and fractional factorial designs, especially the two-level factorial designs, are widely used for experimental planning and data collection; see \cite{box2005statistics,dey2009fractional} and references therein. Resolution and minimum aberration are two main criteria to evaluate the goodness of fractional factorial designs; see \cite{fries1980minimum, ma2001note,xu2001generalized, zhang2005majorization}. These works provide efficient ways of conducting experiments, but not the sample selection methods for  observational data. 
    
    Subsampling is an efficient strategy to compress the data and accelerate the machine learning algorithm. The idea of sampling is traditionally applied in the survey area and designed to estimate point statistics before observing the response \citep{steven2012sampling}. %Furthermore, in the survey, it assumes that the outcome is unobserved, and covariates are typically not involved during the estimation.  
    Recently, it has been leveraged to accelerate the estimation of more complex models.  \cite{drineas2011faster} and \cite{ma2015statistical} used the statistical leverages as the probability to resample observations for fitting linear models.
    \cite{wang2018optimal} proposed an optimal subsampling based on the $A$-optimal design for logistic regression, which is further improved by \cite{wang2019more}. $D$-optimal design is adopted in \cite{wang2019information} to select informative subdata for the efficient linear regression. Those methods
    aim to provide a fast approximation to the model parameters estimated by a given data. What they concern about is the computational efficiency and approximation error on the given training data. Unlike them, this paper considers the idea of subsampling for the stability of model prediction across different and possibly unknown datasets. 
    
   \section{Problem and Notations}
   For a prediction problem, we let $\mathbf{X}$ and $Y$ denote the predictors and outcome variable, respectively. And we define an \textbf{environment} to be the joint distribution $P_{XY}$ of $\{\mathbf{X}, Y\}$. Let $\mathcal{E}$ denote the set of all environments, and $\mathbf{M}^{e} = \{\mathbf{X}^{e}, Y^{e}\}$ be the dataset collected from $e\in\mathcal{E}$. For simplicity, we consider the case where features have finite support. Note that finite support features can be transferred into binary ones via, e.g., dummy encoding.  Therefore, without loss of the generality, we assume features $\mathbf{X}\in \{0,1\}^d$ in this work. In real applications, the joint distribution of features and outcome can vary across environments: $P^{e}_{XY} \neq P^{e'}_{XY}$ for $e,e'\in\mathcal{E}$. The definition of stable prediction \citep{kuang2018stable} is then given as follows:
   \begin{problem}[Stable Prediction]
   Given one training environment $e\in \mathcal{E}$ with dataset $\mathbf{M}^{e}=\{\mathbf{X}^{e},Y^{e}\}$, the task is to \textbf{learn} a predictive model that can \textbf{stably} predict across unknown test environments $\mathcal{E}$.
   \end{problem}
   Here, we measure the performance of stable prediction by Average\_Error and Stability\_Error \citep{kuang2018stable},
   \begin{align*}
   %\label{metrics:acc&stb} 
   \Scale[0.9]{\mbox{Average\_Error}} = \Scale[0.9]{ \frac{1}{|\mathcal{E}|}\sum\limits_{e \in \mathcal{E}}\mbox{Error}(\mathbf{M}^e)},\quad%\\
   %\label{metrics:stb}
   \Scale[0.9]{\mbox{Stability\_Error}}=
   \Scale[0.9]{\sqrt{\frac{1}{|\mathcal{E}|-1}\sum\limits_{e \in \mathcal{E}}\left(\mbox{Error}(\mathbf{M}^e)-\mbox{Average\_Error}\right)^{2}}},
   \end{align*}
   where $\mbox{Error}(\mathbf{M}^e)$ represents the predictive error on dataset $\mathbf{M}^e$.
   %Actually, the Average\_Error and Stability\_Error correspond to the mean and variance of the predictive error over all possible unknown test environments $e\in \mathcal{E}$.
   Now let $\mathbf{X} = \{\mathbf{S},\mathbf{V}\}$, where $\mathbf{S}$ denotes stable features and $\mathbf{V}$ denotes noisy features with following assumption \citep{kuang2018stable}:
   \begin{assumption}
   \label{asmp:stable}
   There exists a probability function $P(y|s)$ such that for all environment $e\in \mathcal{E}$, $P(Y^e=y|\mathbf{S}^e=s, \mathbf{V}^e = v) = P(Y^e=y|\mathbf{S}^e=s) = P(y|s)$.
   \end{assumption}
   Under assumption \ref{asmp:stable}, one can address the
   stable prediction problem by developing a predictive model that learns the stable function $f(\mathbf{S})$ induced by $P(y|s)$. For example, we have $f(\mathbf{S})=\mathbb{E}(Y|\mathbf{S})=\int yP(y|s)dy$ when $Y=f(\mathbf{S})+\varepsilon$ with the zero mean error $\varepsilon$. 
   In practice, we have no prior knowledge on which features belong to $\mathbf{S}$ and which belong to $\mathbf{V}$.
   
   In this paper, we study the stable prediction problem under model misspecification. For simplicity, we only discuss the regression case, and the classification scenario can be similarly derived. 
   Suppose that the true stable function $f(\mathbf{S})$ and $Y$ in environment $e$ are given by:
   \begin{eqnarray}
   \label{eq:Y_introduction}
   Y^e = f(\mathbf{S}^e)+\mathbf{V}^e\beta_V+\epsilon^e = \mathbf{S}^e\beta_S + g(\mathbf{S}^e) +\mathbf{V}^e\beta_V+ \varepsilon^e,
   \end{eqnarray}
   where $\beta_V = \mathbf{0}$ and $\varepsilon^e \perp \mathbf{X}^e$. We assume that the analyst mis-specifies the model by omitting non-linear term $g(\mathbf{S}^e)$ and uses a linear model for prediction. Then, standard linear regression may estimate non-zero effects of noisy features $\mathbf{V}^{e}$ if they are correlated with the omitted term $g(\mathbf{S}^e)$ in the training environment $e$, which leads to instability on prediction since the following theorem implies that the correlation between $\mathbf{V}$ and $g(\mathbf{S})$ is changeable across unknown test environments. 
   \begin{theorem}
   \label{theo:distribution_shift}
   Under assumption \ref{asmp:stable}, the distribution shift across environments is induced by the variation in the joint distribution over $(\mathbf{V},\mathbf{S})$.
   \end{theorem}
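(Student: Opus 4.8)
The plan is to turn the conditional-invariance hypothesis of Assumption~\ref{asmp:stable} into a factorization of the joint law $P^{e}_{XY}$ and read off the claim directly. Because an environment is, by definition, nothing more than this joint distribution, it suffices to show that its only $e$-dependent ingredient is the joint law of the covariates $(\mathbf{S},\mathbf{V})$.

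First I would apply the chain rule in each environment $e$, writing $x=(s,v)$ and factoring
\begin{equation*}
P^{e}_{XY}(x,y) = P^{e}(Y=y \mid \mathbf{S}=s, \mathbf{V}=v)\, P^{e}(\mathbf{S}=s, \mathbf{V}=v).
\end{equation*}
By Assumption~\ref{asmp:stable} the first factor collapses to the fixed kernel $P(y\mid s)$, which is common to all environments; substituting yields
\begin{equation*}
P^{e}_{XY}(x,y) = P(y\mid s)\, P^{e}(\mathbf{S}=s, \mathbf{V}=v).
\end{equation*}
Thus every environment shares the same conditional factor $P(y\mid s)$, and all dependence on $e$ is carried by $P^{e}(\mathbf{S},\mathbf{V})$. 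Comparing two environments $e,e'$, the gap $P^{e}_{XY}-P^{e'}_{XY}$ equals $P(y\mid s)\left(P^{e}(s,v)-P^{e'}(s,v)\right)$, so it can be nonzero only where the covariate laws disagree. Hence $P^{e}_{XY}\neq P^{e'}_{XY}$ forces $P^{e}(\mathbf{S},\mathbf{V})\neq P^{e'}(\mathbf{S},\mathbf{V})$, which is exactly the assertion that the shift is induced by variation in the joint distribution over $(\mathbf{V},\mathbf{S})$.

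I expect the only subtlety to be a careful reading of the word ``induced.'' The easy direction, that varying $P^{e}(\mathbf{S},\mathbf{V})$ can create a shift, is immediate from the factorization. The content of the theorem is the converse, that no shift can originate from any other mechanism, and this is precisely what the invariance of the conditional kernel rules out. To make the argument airtight I would restrict attention to the support where $P(y\mid s)>0$: off this set both joint densities vanish and contribute nothing to the comparison, so identifying the distribution shift with variation in $P^{e}(\mathbf{S},\mathbf{V})$ is exact rather than merely sufficient.
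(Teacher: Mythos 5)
Your proposal is correct and follows essentially the same route as the paper: factor $P^{e}_{XY}$ via the chain rule, invoke Assumption~\ref{asmp:stable} to replace $P^{e}(y\mid s,v)$ with the environment-independent kernel $P(y\mid s)$, and conclude that any shift must come from $P^{e}(\mathbf{S},\mathbf{V})$. Your extra remarks on the converse direction and the support of $P(y\mid s)$ only tighten an argument the paper states more tersely.
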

   
   \begin{proof}
    \begin{align}
    \nonumber P(\mathbf{X}^e, Y^e) &=  P(Y^e|\mathbf{X}^e)P(\mathbf{X}^e) = P(Y^e|\mathbf{S}^e,\mathbf{V}^e)P(\mathbf{S}^e,\mathbf{V}^e)\\
    \nonumber &= P(Y^e|\mathbf{S}^e)P(\mathbf{S}^e,\mathbf{V}^e) 
    \end{align}
    With assumption 1, we know the distribution $P(Y^e|\mathbf{S}^e) = P(Y^{e'}|\mathbf{S}^{e'})$ for different $e,e'\in\mathcal{E}$. Hence, the distribution shift across environments (\ie, $P(\mathbf{X}^e, Y^e) \neq P(\mathbf{X}^{e'}, Y^{e'})$) is induced by the variation in the joint distribution over $(\mathbf{V},\mathbf{S})$ (\ie, $P(\mathbf{S}^e,\mathbf{V}^e) \neq P(\mathbf{S}^{e'},\mathbf{V}^{e'}))$.
    \end{proof}
   
   %We end this section by introducing the notations used in our paper.
   \textbf{Notations.} Let $n$ refer to the sample size, and $d$ be the dimensionality of variables. 
   For any vector $\textbf{v} \in \mathbb{R}^{d\times 1}$, let $\|\textbf{v}\|_1 = \sum_{i=1}^{d}|v_i|$.
   For any matrix $\mathbf{X}\in \mathbb{R}^{n\times d}$, let $\mathbf{X}_{i,\cdot}$ and $\mathbf{X}_{\cdot,j}$ represent the $i^{th}$ sample and the $j^{th}$ variable in $\mathbf{X}$, respectively. To simplify notations, we remove the environment variable $e$ from $\mathbf{X}^e$, $\mathbf{S}^e$, $\mathbf{V}^e$, $\varepsilon^e$, and $Y^e$ when there is no confusion from the context.
   
   %For any vector $\textbf{v} \in \mathbb{R}^{d\times 1}$, let $\|\textbf{v}\|_2^2 = \sum_{i=1}^{d}v_i^{2}$, and $\|\textbf{v}\|_1 = \sum_{i=1}^{d}|v_i|$. 
    
    \section{Variables Deconfounding}
   \subsection{Generalized Global Balancing Loss}
   Theorem \ref{theo:distribution_shift} implies that if the covariates are mutually independent (or there are no confounding effects among variables), we can well estimate parameter $\beta_V$ in Eq. (\ref{eq:Y_introduction}), hence improve the stability of prediction across unknown test environments. The confounding effects between covariates and the binary treatment status are typically eliminated by balancing covariates in causality literature \citep{athey2018approximate,austin2011introduction,hainmueller2012entropy}.
   Recently, \cite{kuang2018stable} successively regarded each variable as the treatment indicator and minimized a global balancing loss:
   \begin{align}\label{eq:L_balancing}
     \Scale[1]{\min\limits_{\mathbf{W}\in\mathbb{R}^n}\mathcal{L}(\mathbf{W},\mathbf{X})} 
      &=\Scale[1]{\sum\limits_{j=1}^{d}\left\|\frac{\mathbf{X}_{\cdot,-j}^T\cdot (\mathbf{W}\odot \mathbf{X}_{\cdot,j})}{\mathbf{W}^T\cdot \mathbf{X}_{\cdot,j}}-\frac{\mathbf{X}_{\cdot,-j}^T\cdot (\mathbf{W}\odot (\mathbf{1}-\mathbf{X}_{\cdot,j}))}{\mathbf{W}^T\cdot (\mathbf{1}-\mathbf{X}_{\cdot,j})}\right\|_2^2} \nonumber
         \\
      &=\Scale[1]{
            \sum\limits_{j=1}^d\sum\limits_{k\neq j}\left[\frac{\sum_{i:X_{ij}=1}W_iX_{ik}}{\sum_{i:X_{ij}=1}W_i}-\frac{\sum_{i:X_{ij}=0}W_iX_{ik}}{\sum_{i:X_{ij}=0}W_i}\right]^2,}
   \end{align}
   where $\odot$ refers to Hadamard product;  $\mathbf{X}_{\cdot,-j} = \mathbf{X} \backslash \{\mathbf{X}_{\cdot,j}\}$ means all the remaining variables by removing the $j^{th}$ variable in $\mathbf{X}$; and $X_{ij}$ denotes the $(i,j)$ entry in $\mathbf{X}$. The difference in quadratic loss enforces $P_{\mathbf{W}}(X_k=1|X_j=1)\approx P_{\mathbf{W}}(X_k=1|X_j=0)$ w.r.t. the weighted conditional distribution $P_{\mathbf{W}}$. When the equation holds exactly, it can be shown that $X_k\in\{0,1\}$ and $X_j\in\{0,1\}$ are independent and thus have no confounding effects. $\mathcal{L}(\mathbf{W},\mathbf{X})$ hence globally balances each variable with others by reweighting the observations. 
   Despite effectiveness, Eq. (\ref{eq:L_balancing}) can only remove the first-order confounding effect among variables, while ignoring the higher-order ones, for example, between $\mathbf{V}$ and a $k$-way interaction function $g(\mathbf{S})$. Moreover, it is not a convex optimization problem and the global optima is hard to find. Finally, using full weighted data can be computationally expensive for further training especially in the big data scenarios.
   
   Considering high-order confounding effects among variables, we define a new \emph{generalized global balancing} loss $\mathcal{L}_{k}(\mathbf{W}, \mathbf{X})$ as:
   \begin{align}\label{eq:GL_balancing}
   \Scale[1]{
   \mathcal{L}_{k}(\mathbf{W}, \mathbf{X})} &=
   \Scale[1]{\sum\limits_{j\in[d]}\sum\limits_{I_k\subseteq[d]\backslash\{j\}}\left[\frac{\mathbf{X}_{I_k}^T\cdot (\mathbf{W}\odot \mathbf{X}_{\cdot,j})}{\mathbf{W}^T\cdot \mathbf{X}_{\cdot,j}}-\frac{\mathbf{X}_{I_k}^T\cdot (\mathbf{W}\odot (\mathbf{1}-\mathbf{X}_{\cdot,j}))}{\mathbf{W}^T\cdot (\mathbf{1}-\mathbf{X}_{\cdot,j})}\right]^2}\nonumber\\
   & = \Scale[1]{ \sum\limits_{j\in[d]}\sum\limits_{I_k\subseteq[d]\backslash\{j\}}\left[\frac{\sum_{i:x_{ij}=1}W_iX_{iI_k}}{\sum_{i:X_{ij}=1}W_i}-\frac{\sum_{i:X_{ij}=0}W_iX_{iI_k}}{\sum_{i:X_{ij}=0}W_i}\right]^2,
   }
   \end{align}
   where $k$ refers to the order of confounding effect with $1\leq k<d$, and $X_{iI_k}, \mathbf{X}_{I_k}$ denote the $k$-way interaction w.r.t. the index subset $I_k$. This loss broadly measures different orders of correlation or confounding effect between $\mathbf{V}$ and $g(\mathbf{S})$. It is easy to see that Eq. (\ref{eq:L_balancing}) is a special case of Eq. (\ref{eq:GL_balancing}) with $k=1$. Our target in this paper is to minimize the aggregation of $L_k(\mathbf{W},\mathbf{X})$ up to the order $k$.
    
    \subsection{Variables Deconfounding via FFDs}
    In this section, we elaborate on how FFDs can be used to deconfound the variables in terms of minimizing the generalized balancing loss in Eq. (\ref{eq:GL_balancing}). Note that the binary-encoded data matrix is closely related to a two-level factorial design, which motivates us to leverage the classical results from the fractional factorial design literature. 
    
    \textbf{Two-level fractional factorial design (FFD) \citep{dey2009fractional}:} It is a size-$m$ subset of the full factorial design that consists of all $2^{d}$ possible combinations of the vector $\{-1,1\}^d$.
    We denote FFD by $\D\in\{-1,1\}^{m\times d}$, where $0<m\leq 2^d$.
    
    One important feature of FFD is that variables and their interactions are orthogonal to some degrees, and they can achieve joint orthogonality when FFD becomes full factorial. Another cardinal observation is that the mean differences in Eq. (\ref{eq:GL_balancing}) can be transferred into the inner products of the main effects and high-order interactions of a design in $\{-1,1\}^{m\times d}$; see the proof of Theorem \ref{theo:loss_optimal}. Consequently, the orthogonality of FFD can help remove non-zero inner products and lead to a minimal loss.
    
    Resolution \citep{fries1980minimum}, denoted as $R$, is an important criterion to reflect the degree of orthogonality. For an FFD, define the \textit{generalized word-length pattern} \citep{ma2001note} %$W(\D)=(A_1(\D),\dots,A_d(\D))$, where \inred{$A_j(\D)$ refers to xxx.}
    \begin{align*}
    %\label{eq:WLP}
    \Scale[1]{W(\D)=(A_1(\D),\dots,A_d(\D)),}
    \end{align*}
    where $A_j(\D)$ refers to the generalized wordlength and
    measures the degree of $j$-factor non-orthogonality.
    Specifically,
    \begin{align}
    \label{eq:GWL}
    \Scale[1]{
    A_j(\D) = \frac{1}{m(q-1)}\sum_{k=0}^dP_j(k;d,q)B_j(\D),\quad j=1,\dots,d,}
    \end{align}
    where $q$ denotes the number of levels,
    $$
    \Scale[1]{
    P_j(x;d,q)=\sum_{w=0}^j(-1)^w(q-1)^{j-w}\tbinom{x}{w}\tbinom{d-x}{j-w}}
    $$
    are the Krawtchouk polynomials \citep{macwilliams1977theory}, and $B(\D)=(B_0(\D),\dots,B_d(\D))$ is the distance distribution 
    $B_j(\D)=m^{-1}|\{(\cc,\dd):d_{H}(\cc,\dd)=j,\cc,\dd\in\D\}|$ with $d_{H}(\cdot,\cdot)$ denoting the Hamming distance. Note that $B_j(\D)$ is invariant to the encoding way of $\D$. 
    Then, the resolution of $\D$ is defined as the smallest index $R\leq d$ such that $A_{R}(\D)> 0$. Note that the full factorial design with $m=2^d$ has resolution $d+1$, since $A_j(\D)=0$ for all $j\in[d]$.
    
    The following lemma \citep{hedayat2012orthogonal} explains the relationship between resolution $R$ and orthogonal strength. We omit `fractional factorial' in the resolution-$R$ design without ambiguity to the context.
    \begin{lemma}\label{lemma-orth}
    The resolution-$R$ design $\mathbf{D}$ has orthogonal strength $t=R-1$, where $t$ means that one can see all possible $t$-tuples equally often for $m/2^t$ times in any $t$ columns of $\mathbf{D}$.
    \end{lemma}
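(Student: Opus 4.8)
The plan is to recast both ``resolution'' and ``orthogonal strength'' as conditions on the vanishing of the $J$-characteristics of $\D$, after which the identity $t=R-1$ drops out immediately. For a nonempty index set $I\subseteq[d]$, write $J_I(\D)=\sum_{r=1}^{m}\prod_{i\in I}D_{ri}$ for the inner product between the constant column and the interaction column $\prod_{i\in I}\D_{\cdot,i}$; this is exactly the kind of inner product of main effects and high-order interactions flagged in the remark preceding Eq. (\ref{eq:GWL}).

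First I would build the bridge from the distance-distribution definition in Eq. (\ref{eq:GWL}) to these inner products. For $q=2$ the Krawtchouk transform of the distance distribution collapses to the well-known identity $A_j(\D)=m^{-2}\sum_{|I|=j}J_I(\D)^2$ \citep{xu2001generalized,macwilliams1977theory}. A sum of squares vanishes iff every summand does, so this yields the clean reformulation $A_j(\D)=0 \iff J_I(\D)=0$ for all $I$ with $|I|=j$.

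Second I would establish the combinatorial equivalence that $\D$ has strength $\geq t$ iff $J_I(\D)=0$ for every nonempty $I$ with $|I|\leq t$. For the forward direction, fix such an $I$; strength $\geq t\geq|I|$ forces each $\pm1$ pattern on the columns of $I$ to appear $m/2^{|I|}$ times, whence $J_I(\D)=\tfrac{m}{2^{|I|}}\sum_{s\in\{-1,1\}^{|I|}}\prod_{i}s_i=0$. For the converse I would count, for $|I|=t$ and a target tuple $s$, the matching rows via the indicator expansion $\sum_{r}\prod_{i\in I}\tfrac{1+s_iD_{ri}}{2}=2^{-t}\sum_{I'\subseteq I}\bigl(\prod_{i\in I'}s_i\bigr)J_{I'}(\D)$; since $J_\emptyset=m$ while every nonempty term vanishes, this count equals exactly $m/2^t$, delivering both strength $\geq t$ and the stated uniform frequency.

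Combining the two steps finishes the argument: resolution $R$ means $A_1=\cdots=A_{R-1}=0<A_R$, equivalently $J_I=0$ for all $|I|<R$ but $J_I\neq0$ for some $|I|=R$, which by the second step is precisely strength $\geq R-1$ together with failure of strength $\geq R$, i.e. strength exactly $t=R-1$ with uniform count $m/2^t$ in any $t$ columns. I expect the main obstacle to be the converse half of the second step: one must check that the indicator expansion preserves the constant term $J_\emptyset=m$ while annihilating every nonempty low-order $J$-characteristic, so that the frequency is exactly uniform rather than merely balanced in aggregate. The transform identity of the first step is standard but must be stated with care, since the normalisation in Eq. (\ref{eq:GWL}) specialises to $q=2$ and a misplaced factor of $m$ or $2$ would corrupt the sum-of-squares form on which everything rests.
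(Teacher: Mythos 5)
Your proposal is correct, but note that the paper does not actually prove Lemma~\ref{lemma-orth}: it is quoted verbatim from the orthogonal-array literature \citep{hedayat2012orthogonal}, so there is no in-paper argument to compare against. What you supply is a legitimate self-contained derivation resting on two standard ingredients: the MacWilliams/Krawtchouk identity $A_j(\D)=m^{-2}\sum_{|I|=j}J_I(\D)^2$ specialised to $q=2$ (which you cite rather than re-derive --- reasonable, though it is the one step whose normalisation you would have to pin down, especially since the paper's Eq.~(\ref{eq:GWL}) appears to contain a typo, summing $B_j$ rather than $B_k$), and the indicator-expansion characterisation of strength, $\sum_{r}\prod_{i\in I}\tfrac{1+s_iD_{ri}}{2}=2^{-|I|}\sum_{I'\subseteq I}\bigl(\prod_{i\in I'}s_i\bigr)J_{I'}(\D)$, whose converse direction you correctly identify as the place where the constant term $J_\emptyset=m$ must survive while all nonempty low-order terms vanish. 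Both directions are handled properly, including the passage from ``strength $\geq R-1$ and not $\geq R$'' to the exact value $t=R-1$. A side benefit of your route is that the identity $\mathbf{1}^T\D_{I_k}=0$ for $k\leq t$, which the paper later proves by an explicit induction on full factorials inside Theorem~\ref{theo:orthogonal_with_k_interactions}, is exactly the statement $J_{I_k}(\D)=0$ and so falls out of your step two for free; the paper's inductive argument is more elementary but less structural.
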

    
    Lemma \ref{lemma-orth} implies that any $t$ columns/variables of $\mathbf{D}$ contain $m/2^t$ full factorials such that every $t$ factors are jointly orthogonal. Furthermore, this lemma also implies low order $t'<t$ orthogonal strength exists for the resolution-$R$ design. In other words, FFD can preserve the joint orthogonality up to its resolution minus one. For example, a resolution-3 design guarantees the pairwise orthogonality among the main effects of all factors.
    The following theorem further states the preserved orthogonality among the main effect and their $k$-way interaction. 
    
    \begin{theorem}\label{theo:orthogonal_with_k_interactions}
    Let $I_k\subseteq[d]$ denote any collection of distinctive factors with $|I_k|=k\leq d$, $\D=(\D_{\cdot1},\dots,\D_{\cdot d})\in\{-1,1\}^{m\times d}$ be the design matrix, and $\D_{I_k}\in\{-1,1\}^{m}$ represent $k$-way interaction of $I_k$.
    We have a) $\D_{\cdot i}^T\D_{\cdot j}=0, i\neq j,$ for any resolution-$R$ design with $R\geq 3$; and b)
    $\D_{I_k}^T\D_{\cdot j}=0, j\in[d], 2\leq k\leq R-2,$ for any resolution-$R$ design with $R\geq4$.
    \end{theorem}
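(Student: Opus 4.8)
The plan is to reduce both claims to one elementary fact about orthogonal arrays: \emph{in a design of strength $t$, the sum over the $m$ rows of the entrywise product of any $s$ distinct columns vanishes whenever $1\leq s\leq t$}. First I would establish this fact from Lemma~\ref{lemma-orth}. That lemma endows the resolution-$R$ design with strength $t=R-1$, so any choice of $s\leq t$ columns contains $m/2^s$ copies of the full $2^s$ factorial, in which every tuple $v\in\{-1,1\}^s$ occurs exactly $m/2^s$ times. The row-sum of the product of those $s$ columns then equals $(m/2^s)\sum_{v\in\{-1,1\}^s}\prod_{\ell=1}^s v_\ell$, and the inner sum factors as $\prod_{\ell=1}^s\bigl(\sum_{v_\ell\in\{-1,1\}}v_\ell\bigr)=0$ because each one-dimensional sum is $(+1)+(-1)=0$. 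This single computation drives everything; note it requires $s\geq1$, since the empty product ($s=0$) sums to $m\neq0$.

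For part~(a) I would apply this fact with $s=2$: for $i\neq j$ the inner product $\D_{\cdot i}^T\D_{\cdot j}$ is precisely the row-sum of the product of two distinct columns, which vanishes once $t=R-1\geq2$, i.e. $R\geq3$. Part~(b) then splits on whether $j$ lies in $I_k$. If $j\notin I_k$, then $\D_{I_k}^T\D_{\cdot j}$ is the row-sum of the product of the $k+1$ distinct columns indexed by $I_k\cup\{j\}$, so the fact applies with $s=k+1$ and yields $0$ provided $k+1\leq t=R-1$, i.e. $k\leq R-2$ — exactly the stated upper limit. If instead $j\in I_k$, I would exploit that the entries are $\pm1$, so $D_{rj}^2=1$ for every row $r$; multiplying $\D_{I_k}$ by $\D_{\cdot j}$ collapses the $j$-th factor and leaves the $(k-1)$-way interaction $\D_{I_k\setminus\{j\}}$, whose row-sum is a product of $k-1$ distinct columns. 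Since $k\geq2$ guarantees $k-1\geq1$ and $k-1\leq R-3<t$, the fact again gives $0$.

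The genuine content, and the only place the hypotheses actually bind, is the case $j\notin I_k$ in part~(b): it is the $(k+1)$-way product there that consumes the full strength $R-1$ and pins down both the ceiling $k\leq R-2$ and, for the range $2\leq k\leq R-2$ to be nonempty, the requirement $R\geq4$. The case $j\in I_k$ is easier because squaring a $\pm1$ column lowers the interaction order, and the floor $k\geq2$ is exactly what keeps the resulting product nonempty (for $k=1$ with $j\in I_k$ one would instead get $\D_{\cdot j}^T\D_{\cdot j}=m$). The main thing to watch is the bookkeeping: in each inner product I must correctly count the number of distinct surviving columns after accounting for any repeated index, and verify that this count never exceeds the available strength $t=R-1$.
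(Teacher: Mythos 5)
Your proposal is correct and follows essentially the same route as the paper: both reduce everything to the vanishing of $\mathbf{1}^T\D_{I_s}$ for $1\le s\le t=R-1$ (via Lemma~\ref{lemma-orth}) and then split part~(b) on whether $j\in I_k$, using $D_{rj}^2=1$ to collapse the repeated index. The only difference is how that auxiliary fact is established --- the paper proves it by induction on the full factorial, whereas you get it directly from the factorization $\sum_{v\in\{-1,1\}^s}\prod_{\ell}v_\ell=\prod_{\ell}\bigl(\sum_{v_\ell\in\{-1,1\}}v_\ell\bigr)=0$, a slightly cleaner computation, and your uniform treatment of general $k$ is more explicit than the paper's ``similarly'' for $t\ge 4$.
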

    \begin{proof}
     To prove above theorem, we first inductively show a lemma that any full factorial design (FD) denoted by $\D\in\{-1,1\}^{2^d\times d}$ has $\mathbf{1}^T\D_{I_d}=0$ for the integer $d\geq 1$. It is easy to check that $\mathbf{1}^T\D_{I_d}=0$ holds when $d=1,2$. Suppose this equality holds for any integer $d=\ell, \ell\geq1$. When $d=\ell+1$, note that $\D$ is invariant to the permutation of rows, so we rearrange the first column and have
     $$\D_{I_{\ell+1}}=(\underbrace{-1,\dots,-1}_{2^\ell},\underbrace{1,\dots,1}_{2^\ell})^T\odot(\D^{(1)}_{I_{\ell}},\D^{(2)}_{I_{\ell}})^T,$$
     where the sub-designs $\D^{(1)},\D^{(2)}\in\{-1,1\}^{2^\ell\times \ell}$ also belong to FD \citep{hedayat2012orthogonal}.
     Therefore, it can be derived that $\mathbf{1}^T\D_{I_{\ell+1}}=\mathbf{1}^T\D^{(2)}_{I_\ell}-\mathbf{1}^T\D^{(1)}_{I_\ell}=0$ and the statement gets proved. So for any FFD with orthogonal strength $t$, we have $\mathbf{1}^T\D_{I_k}=0$ for $k\in[t]$, because Lemma 1 tells that all combinations of at most $t$-tuples (full factorial design) appear with equal frequency in corresponding distinctive columns.
     
     With the above property, we can easily show the first case in the theorem as $\D_{\cdot i}^T\D_{\cdot j}=\mathbf{1}^T\D_{I_2}$ with $I_2=\{i,j\}$, and the resolution-$R$ design has orthogonal strength $t=R-1\geq 2$, which implies $\D_{\cdot i}^T\D_{\cdot j}=0$. For the second case, we restate it in terms of orthogonal strength $t$, that is, we need to show $\D^T_{I_k}\D_{\cdot j}=0,$ $j\in[d], 2\leq k\leq t-1$ for $t\geq 3$, which can be inductively proved in the similar manner. Without loss of the generality, we just show $\D^T_{I_k}\D_{\cdot j}=0$ for $t=3$ in what follows. When $j\notin I_2$, we can construct a 3-column FFD with indices $I_3=\{j\}\cup I_2$ and $\D^T_{I_2}\D_{\cdot j}=\mathbf{1}^T\D_{I_{3}}$. And we can similarly obtain $\mathbf{1}^T\D_{I_{3}}=0$ as done in the first case because of $t=3$. If $j\in I_2=\{i,j\}$, it is easy to check that $\D^T_{I_2}\D_{\cdot j}=\mathbf{1}^T\D_{\cdot i}=0$ as $t=3$. For $t\geq 4$, 
     since high-order orthogonal strength implies the low order ones, we only need to consider the situation of $k=t-1$. And we can similarly obtain the conclusion by discussing $j$ in $I_k$ or not.
     \end{proof}
    
    With these results of FFD, if we determine the subdata matrix $\mathbf{X}\in\{0,1\}^{m\times d}$ by exactly matching it to some resolution-$R$ design $\D\in\{-1,1\}^{m\times d}$ with the rule $\mathbf{D}_{\cdot j}=2\mathbf{X}_{\cdot j}-\mathbf{1}$.
    We can show that such $\mathbf{X}$ is the optimal solution of Eq. (\ref{eq:GL_balancing}) with weights $\mathbf{W}=\mathbf{1}$.
    \begin{theorem}\label{theo:loss_optimal}
    For any $\mathbf{X}\in\{0,1\}^{m\times d}$ matching the resolution-$R$ design with $R\geq 3$,
    we have
    $\mathcal{L}_{k}(\mathbf{W},\mathbf{X})=0$ for any $1\leq k\leq R-2$ and $\mathbf{W}=\mathbf{1}$.
    \end{theorem}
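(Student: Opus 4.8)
The plan is to reduce each summand of $\mathcal{L}_{k}(\mathbf{1},\mathbf{X})$ to a signed inner product in the $\{-1,1\}$ design $\D$ and then annihilate it using exactly the orthogonal-strength property already extracted in the proof of Theorem~\ref{theo:orthogonal_with_k_interactions}. First I would record two immediate consequences of Lemma~\ref{lemma-orth}. Since $R\geq 3$ gives orthogonal strength $t=R-1\geq 2$, every single column of $\D$ contains $+1$ and $-1$ equally often, so for each $j$ the two denominators satisfy $\sum_{i:X_{ij}=1}1=\sum_{i:X_{ij}=0}1=m/2$. Moreover, exactly as shown inside the proof of Theorem~\ref{theo:orthogonal_with_k_interactions}, one has $\mathbf{1}^T\D_{T}=0$ for every index set $T$ with $1\le|T|\le t$.

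Next I would translate the bracket into $\D$-language. Fix $j$ and $I_k\subseteq[d]\setminus\{j\}$. Because the two denominators are equal, the bracketed difference equals $\tfrac{2}{m}\sum_i X_{iI_k}\,\sigma_i$, where $\sigma_i=+1$ when $X_{ij}=1$ and $\sigma_i=-1$ when $X_{ij}=0$; under the matching rule $\D_{\cdot j}=2\mathbf{X}_{\cdot j}-\mathbf{1}$ this sign is precisely $\sigma_i=D_{ij}$. Writing the $\{0,1\}$ interaction as a product and substituting $X_{i\ell}=(1+D_{i\ell})/2$ yields the expansion $X_{iI_k}=2^{-k}\sum_{S\subseteq I_k}D_{iS}$, where $D_{iS}=\prod_{\ell\in S}D_{i\ell}$ and $D_{i\emptyset}=1$. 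Hence each bracket becomes, up to the harmless factor $2^{-k+1}/m$, the sum $\sum_{S\subseteq I_k}\D_{S}^T\D_{\cdot j}$.

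It then remains to verify that every inner product $\D_{S}^T\D_{\cdot j}$ vanishes. Since $S\subseteq I_k$ and $j\notin I_k$, we have $j\notin S$, so $\D_{S}^T\D_{\cdot j}=\mathbf{1}^T\D_{S\cup\{j\}}$ with $|S\cup\{j\}|=|S|+1\le k+1$. For $k\le R-2$ this cardinality is at most $R-1=t$, so the orthogonal-strength fact above forces $\mathbf{1}^T\D_{S\cup\{j\}}=0$ (the case $S=\emptyset$ being simply $\mathbf{1}^T\D_{\cdot j}=0$). Every bracket therefore vanishes, and summing over $j$ and $I_k$ gives $\mathcal{L}_{k}(\mathbf{1},\mathbf{X})=0$.

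The one place demanding care is the bookkeeping of cardinalities: a single bracket mixes all sub-interactions $S\subseteq I_k$, and the argument succeeds only because the largest word appearing, $\D_{S\cup\{j\}}$ of length $k+1$, still fits inside the orthogonal strength $t=R-1$. This is exactly why the statement is restricted to $1\le k\le R-2$; pushing to $k=R-1$ would produce a length-$R$ word that need not be orthogonal to $\mathbf{1}$, and the vanishing would fail. The remaining manipulations, namely the binomial expansion of the product and the identification $\sigma_i=D_{ij}$, are routine.
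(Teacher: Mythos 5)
Your proposal is correct and follows essentially the same route as the paper's proof: both reduce each bracket to $\tfrac{2}{m}\mathbf{X}_{I_k}^T\D_{\cdot j}$ using the equal cell counts, expand the $\{0,1\}$ interaction via $X_{i\ell}=(1+D_{i\ell})/2$ into a sum of $\{-1,1\}$ words, and annihilate every resulting inner product by the orthogonal strength $t=R-1$. The only (harmless) difference is that you invoke the underlying fact $\mathbf{1}^T\D_{T}=0$ for $1\le|T|\le t$ directly, exploiting $j\notin I_k$ to avoid the case analysis, whereas the paper routes the final step through the statement of Theorem~\ref{theo:orthogonal_with_k_interactions}.
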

    \begin{proof}
     Let $\D\in\{-1,1\}^{m\times d}$ be the resolution-$R$ design matched with $\mathbf{X}$. For any $I_k\subseteq[d]\backslash\{j\}$ with a given $j$ and feasible $k$, let $I_k=\{j_1,\dots,j_k\}$ and we have
     \begin{align*}
     \mathbf{X}_{I_k}
     &= \frac{1}{2^k}(\D_{\cdot j_1}+\mathbf{1})\odot\dots \odot (\D_{\cdot j_k}+\mathbf{1})\\
     &= \Scale[1.0]{\frac{1}{2^k}\left(\mathbf{1}+\sum_{h=1}^{k}\sum_{\tilde{I}_h\subseteq I_k} \D_{\tilde{I}_h}\right),}
     \end{align*}
     where $\tilde{I}_h$ is the subset of $I_k$ with cardinality $h$.
     When $W=\mathbf{1}$, we have
     \begin{align*}
     &\Scale[1.0]{\left[\frac{\mathbf{X}_{I_k}^T\cdot (\mathbf{W}\odot \mathbf{X}_{\cdot,j})}{\mathbf{W}^T\cdot \mathbf{X}_{\cdot,j}}-\frac{\mathbf{X}_{I_k}^T\cdot (\mathbf{W}\odot (\mathbf{1}-\mathbf{X}_{\cdot,j}))}{\mathbf{W}^T\cdot (\mathbf{1}-\mathbf{X}_{\cdot,j})}\right]^2}\\
     =&\Scale[1.0]{\left[\frac{2}{m}\mathbf{X}_{I_k}^T\left(2\mathbf{X}_{\cdot j}-\mathbf{1}\right)\right]^2=\left(\frac{2}{m}\mathbf{X}_{I_k}^T\mathbf{D}_{\cdot j}\right)^2} \\
     =&\Scale[1.0]{\frac{1}{4^{k-1}m^2}\left(\mathbf{1}^T\D_{\cdot j}+\sum_{h=1}^{k}\sum_{\tilde{I}_h\subseteq I_k} \D_{\tilde{I}_h}^T\D_{\cdot j}\right)^2=0,}
     \end{align*}
     where the last equality follows Theorem 2. Specifically, when $k=1$ or $R=3$, we have $\tilde{I}_h=I_k=\{j_k\}$ with $j_k\neq j$ and the last equality becomes zero according to the case a) in Theorem 2. Similar results can be derived for $2 \leq k\leq R-2$ ($R\geq 4$) following the case b).
     Consequently, it is evident that $\mathcal{L}_{k}(\mathbf{1},\mathbf{X})$ equals to zero for any $1\leq k\leq R-2$ ($R\geq 3$).
     \end{proof}
    
    This theorem reveals that higher resolution design can lead to more stable outcomes, as the lower-order confounding effects are removed by the perfect balance. Since a higher resolution design would require a larger run size $m$. In the present paper, we use resolution-5 design as a subsampling template,  which ensures $\mathcal{L}_k(\mathbf{1},\mathbf{X})=0$ for $k=1,2,3$. The template can be easily generated from open source packages, such as \texttt{FrF2} in R \citep{gronmping2014r}. Finally, we can significantly save the calculation time for model training based on selected data, as $m$ is typically much smaller than full data size $n$.
    
   We end this section with a toy example in Fig. \ref{fig:toy_example} which illustrates the main idea of different deconfounding methods.
   Consider a three-dimensional data set with binary input. We visualize the sample space in Fig. \ref{fig:raw} and the bubble size corresponds to the number of observations on that point. Note that each facet corresponds to the conditional distribution of two variables w.r.t. the remaining one. Therefore, all bubbles should have the same size in the ideal case when there are no confounding effects among variables, since any two opposite facets should have the same distribution.  To achieve this goal, global balancing method (Fig. \ref{fig:reweigh}) takes all sample values but reweights them to change the data distribution. It is easy to see that the ideal case accords with the full factorial design where all possible sample values appear with the same frequencies. So our subsampling method (Fig. \ref{fig:subsample}) only uses a fraction of samples that well represent the ideal situation. For example,
   the conditional distributions of $X_i|X_k$ and $X_j|X_k$ are the same in Fig. \ref{fig:subsample}, where $i,j,k\in\{1,2,3\}$ are distinctive indices. 
   
   \begin{figure*}[htb]
    \centering
    \vspace{-0.1in}
    \subfloat[Raw data \label{fig:raw}]{
      \includegraphics[width=1.66in]{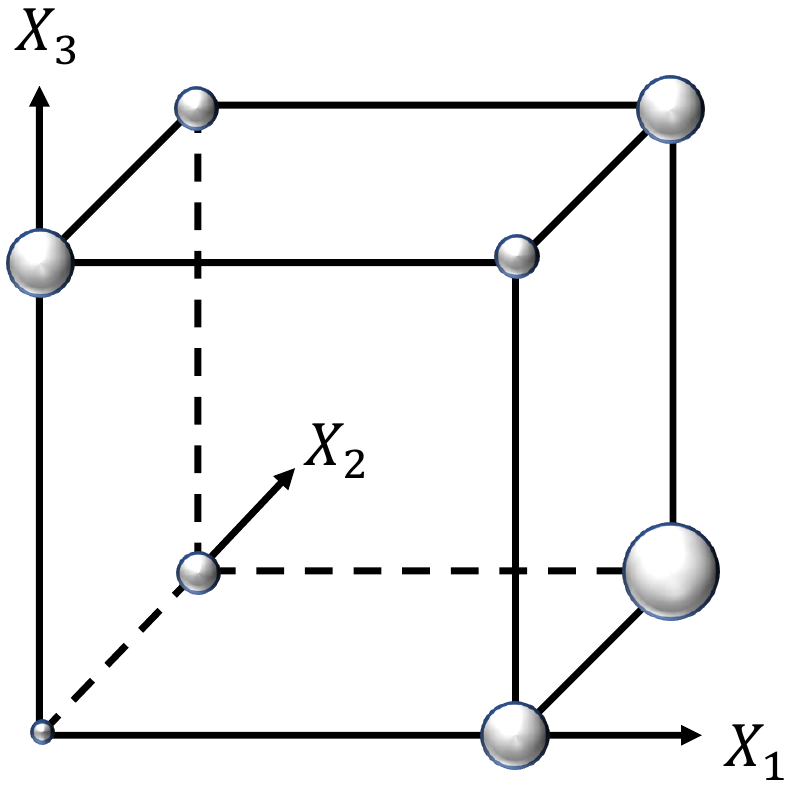}
    }
    \ \ 
    \subfloat[Reweighting\label{fig:reweigh}]{
      \includegraphics[width=1.66in]{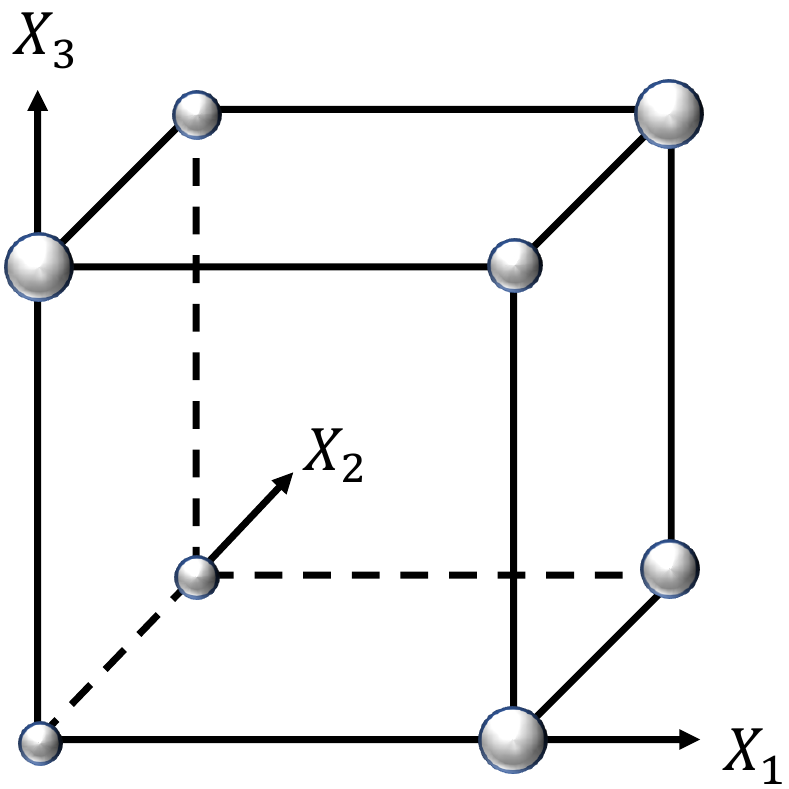}
    }
    \ \ 
    \subfloat[Subsampling\label{fig:subsample}]{
      \includegraphics[width=1.66in]{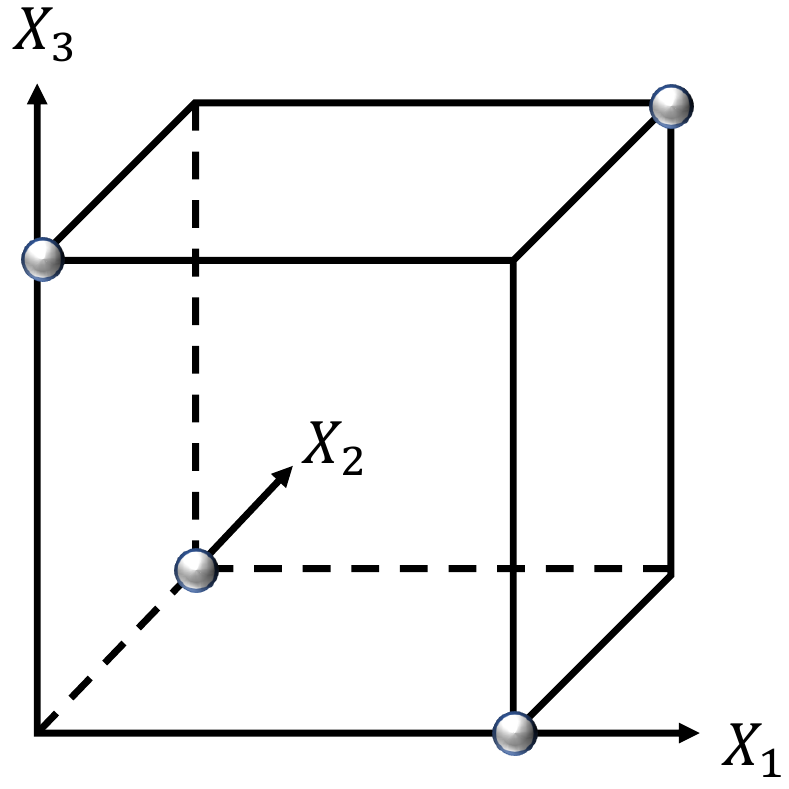}
    }
    \vspace{-0.1in}
    \caption{A toy example to illustrate the main idea of each deconfounding method.}
    \label{fig:toy_example}
    \vspace{-0.1in}
    \end{figure*}

    \section{BSSP Algorithm}
    BSSP algorithm consists of an FFD-based subsampling method and a subsampled learning model.  We first introduce the specific subsampling algorithm that is feasible for general situations. 
    To obtain a balanced subdata with deconfounded variables, we propose a matching algorithm based on the FFD template.  Given a resolution-$R$ design $\D\in\{-1,1\}^{m\times d}$, we transfer its encoding into $\{0,1\}$. Then we select the samples from $\mathbf{M} = \{\mathbf{X}\in \{0,1\}^{n\times d}, Y\in \mathbb{R}^{n}\}$ if some row in $\D$ can match the one in $\mathbf{X}$. 
    The matching processes are described in Algorithm \ref{alg:matching}.
    
    \begin{algorithm}[htb]
    \caption{{Sample\_Matching Algorithm}}
    \label{alg:matching}
    \begin{algorithmic}[1]
    \Require
    Observed samples $\mathbf{M} = \{\mathbf{X}\in \{0,1\}^{n\times d}, Y\in \mathbb{R}^{n}\}$ and a design $\D\in \{0,1\}^{m\times d}$
    \Ensure
    A subset of samples $\mathbf{M}_{\rm sub}$
    \State Set $\mathbf{M}_{\rm sub} = \emptyset$
    \For{each row/sample $\D_{i,\cdot} \in \D$ }
    \If{$\D_{i,\cdot} == \X_{j,\cdot}$}
    \State $\mathbf{M}_{\rm sub} = \mathbf{M}_{\rm sub} \cup (\X_{j,\cdot}, Y_{j})$
    \State break
    \EndIf
    \EndFor\\
    \Return $\mathbf{M}_{\rm sub}$
    \end{algorithmic}
    \end{algorithm}
    
    %If all of the rows/samples in the design $\D$ can be perfectly matched in observed samples,  the selected subset of samples $\mathbf{M}_{\rm sub}$ would achieve the desired non-confounding properties as described in Theorem \ref{theo:loss_optimal}. 
    However, it may not be easy to achieve a perfect matching and thus non-confounding properties in practice. Note that
    Lemma \ref{lemma-orth} implies the orthogonality is invariant to the column permutation of $\mathbf{D}$,  which we may denote as $\mathscr{D}$ with the cardinality $d!$. All the designs in $\mathscr{D}$ share the same orthogonal properties as the template design $\D$.
    Hence, we may find a better design $\D'$ in $\mathscr{D}$ such that all its design points can be matched to the observed samples.
    
    \begin{algorithm}[htb]
    \caption{{FFD-Based Subsampling Algorithm}}
    \label{alg:subsampling}
    \begin{algorithmic}[1]
    \Require
    Observed samples $\mathbf{M} = \{\mathbf{X}\in \{0,1\}^{n\times d}, Y\in \mathbb{R}^{n}\}$ and a resolution-$R$ design $\mathbf{D}\in\{0,1\}^{m\times d}$.
    \Ensure
    A subset of samples $\mathbf{M}_{\rm sub}$
    \State Set $\mathbf{M}_{\rm sub} = \emptyset$
    %\State Find a fractional factorial design $\D$ with resolution-$R$,
    \State Generate a design set $\mathscr{D}$ by column permutation on $\D$,
    \For{$\D' \in \mathscr{D}$ }
    \State $\mathbf{M'}_{\rm sub} = $ $Sample\_Matching$ ($\mathbf{M}, \mathbf{D'}$)
    \State Calculate its confounding measure $\psi(\mathbf{M'}_{\rm sub})$
    \If{$\psi(\mathbf{M'}_{\rm sub})<\psi(\mathbf{M}_{\rm sub})$}
    \State Let $\mathbf{M}_{\rm sub} = \mathbf{M'}_{\rm sub}$
    \EndIf
    \If{$\psi(\mathbf{M}_{\rm sub})==0$}\Comment{ all samples in $\D'$ are matched}
    \State break
    \EndIf
    \EndFor\\
    \Return $\mathbf{M}_{\rm sub}$
    \end{algorithmic}
    \end{algorithm}
    
    If none of the designs in $\mathscr{D}$ can be fully matched to the observed samples in $\mathbf{X}$, 
    we propose a \textbf{confounding measure} to evaluate the degree of confounding properties for $\mathbf{M}_{\rm sub}=\{\widetilde{\mathbf{X}},\widetilde{Y}\}$: 
    \begin{eqnarray}
    \label{eq:confounding-measure}
    \Scale[1]{\psi(\mathbf{M}_{\rm sub})=\sum_{j=1}^{R-1}\rho^jA_j(\widetilde{\mathbf{X}}),\quad 0<\rho<1,}
    \end{eqnarray}
    where $A_j(\widetilde{\mathbf{X}})$ refers to the generalized wordlength in Eq. (\ref{eq:GWL}) and $\rho$ is a parameter for exponentially weighing. In the ideal case, we have $\psi(\mathbf{M}_{\rm sub})=0$ according to the definition of resolution-$R$ design. Note that $A_j(\widetilde{\mathbf{X}})$ reflects the severity of order-$j$ confounding effects and is invariant to the design encoding. Motivated by the famous \textit{effect hierarchy principle} \cite{wu2011experiments}: (i) lower-order effects are more likely to be important; and (ii) effects with the same order are equally likely to be important, we set $\rho=0.9$ to assign more weights to the lower-order effects.
     
     A simulation is performed shown by Fig. \ref{fig-measure} to validate that $\psi(\mathbf{M}_{\rm sub})$ can measure the deviation of $\tilde{\mathbf{X}}$ to the FFD, where we calculate the $\psi(\mathbf{M}_{\rm sub})$ on random subsets of a 128-run resolution-5 design with different sizes and 100 replications. As we can see, the measure goes to 0 without any variation when $\tilde{\mathbf{X}}$ is close to the template design.
          
          \begin{figure}[h]
          \centering
          \includegraphics[width=0.75\linewidth]{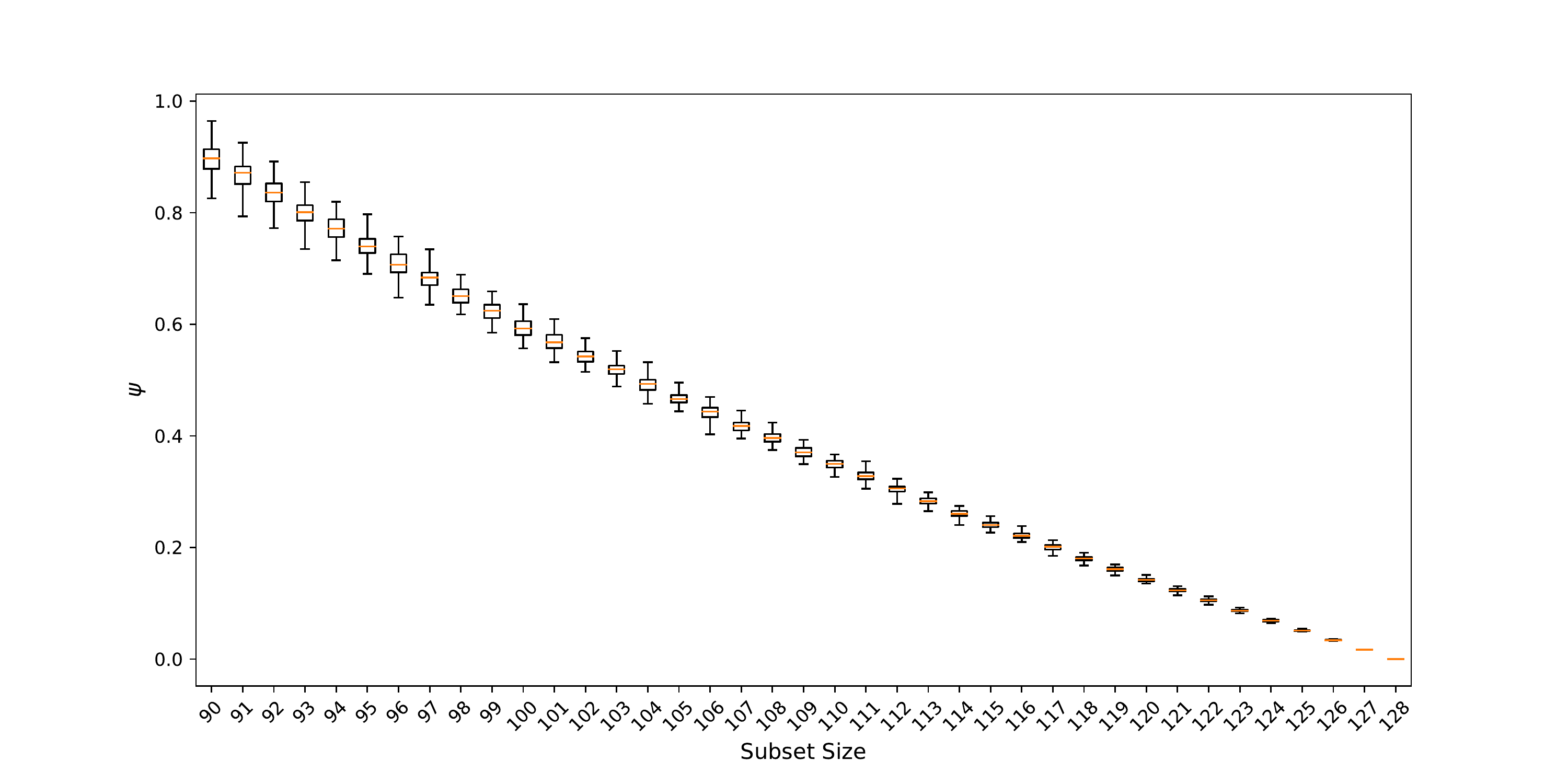}
          \caption{$\psi(\mathbf{M}_{\rm sub})$ on random subsets of a 128-run resolution-5 FFD with different subdata sizes. }
          \label{fig-measure}
          \end{figure}

    Based on Eq. (\ref{eq:confounding-measure}), one can calculate the confounding measure for each $\D$ in $\mathscr{D}$. Then, we can rank these subdata candidates and select the one with the minimal $\psi$-value. The details of the complete subsampling method are described in Algorithm \ref{alg:subsampling}.
    
    With the balance-subsampled data $\mathbf{M}_{\rm sub}$ from Algorithm \ref{alg:subsampling}, one can directly run a machine learning model for prediction, including regression for continuous outcome $Y$ and classification for categorized outcome $Y$. In this work, we simply consider the typical linear regression and logistic model with the original linear features of $\tilde{\X}$.  

    \section{Experiments}
    We compare our BSSP with three algorithms in the experiments. Traditional Logistic Regression (LR) and Ordinary Least Square (OLS) are set to be the baseline methods for classification and regression tasks, respectively. As for the benchmark of stable learning, we consider the Global Balancing Regression (GBR) \cite{kuang2018stable}. It learns the global weights for all samples in order to make the variables approximately non-confounding, and then performs weighted classification and regression by LR and OLS. Additionally, the LASSO regularizer \cite{tibshirani1996regression} is configured in all methods.  The performance of different approaches are then evaluated by RMSE of predicted outcomes, $\beta\mbox{\_Error}$  ($\|\beta-\hat{\beta}\|_1$), Average\_Error, and Stability\_Error, where we define
     $\mbox{Error}(\mathbf{M}^e)=\mbox{RMSE}(\mathbf{M}^e)$ in Average\_Error and Stability\_Error. A resolution-5 design with $m=128$ is used as the template for experiments.
    
    \subsection{Synthetic Datasets}
    \subsubsection{Stable Prediction for Regression Task}
    %\vspace{-0.25cm}
    \paragraph{Datasets.}
    We generate binary predictors $\mathbf{X} =\{\mathbf{S}_{\cdot,1}, \cdots, \mathbf{S}_{\cdot,5}, \mathbf{V}_{\cdot,1}, \cdots, \mathbf{V}_{\cdot,5}\}$ with independent entries from $\mathcal{N}(0,1)$.
    Then we binarize the variable by setting $\mathbf{X}_{\cdot,j}=1$ if $\mathbf{X}_{\cdot,j}\geq 0$, otherwise $\mathbf{X}_{\cdot,j}=0$.
    Finally, we generate continuous response variable $Y$ following Eq. (\ref{eq:Y_introduction}), where $\beta_S = [\frac{1}{3},-\frac{2}{3},1,-\frac{1}{3},\frac{2}{3}]$, $\beta_V = \mathbf{0}$, and $\varepsilon \sim \mathcal{N}(0,0.3^2)$. The non-linear term $g(\cdot)$ is set to be $\mathbf{S}_{\cdot,1}\mathbf{S}_{\cdot,2}$. 
    To test the stability of the algorithm, we vary the environment via a biased sample selection with a rate of $r\in[-3,-1)\cup(1,3]$. Specifically, we select the sample with a probability $p=|r|^{-5\tau}$, where $\tau=|\mathbf{S}\beta_S+\mathbf{S}_{\cdot,1}\mathbf{S}_{\cdot,2}-\mbox{sign}(r)\mathbf{V}_5|$. The sign of $r$ determines the type of correlation between $Y$ and $\mathbf{V}_5$. $r>0$ refers to a positive correlation and $|r|$ quantifies the magnitude of correlation. We generate $n=2000$ samples after the biased selection.
    
    \vspace{-0.35cm}
    \paragraph{Results.}
    We train all models in the same environment with $r_{\rm train}=2$, and replicate the training for 50 times. We report mean and variance of $\beta\mbox{\_Error}$ under the training environment in Fig. \ref{fig:s0v_beta_S} \& \ref{fig:s0v_beta_V}. 
    To evaluate the stability of prediction, we test all models on various test environments with different bias rates $r_{\rm test}\in\{-3,-2,-1.7,-1.5,-1.3,1.3,1.5,1.7,2,3\}$.
    For each $r_{\rm test}$, we generate 50 independent test datasets and report the averaged RMSE in Fig. \ref{fig:s0v_rmse}. Given RMSEs, we report Average$\_$Error and Stability$\_$Error to evaluate the stability of prediction (See Fig. \ref{fig:s0v_stability}). Compared with baselines, the BSSP algorithm achieves a more precise estimation of the parameters and the best stable prediction across unknown  test environments with fewer samples. 
    
    \begin{figure*}[t]
    \centering
    \vspace{-0.2in}
    \subfloat[$\beta$\_Error of $\mathbf{S}$\label{fig:s0v_beta_S}]{
      \includegraphics[width=1.5in]{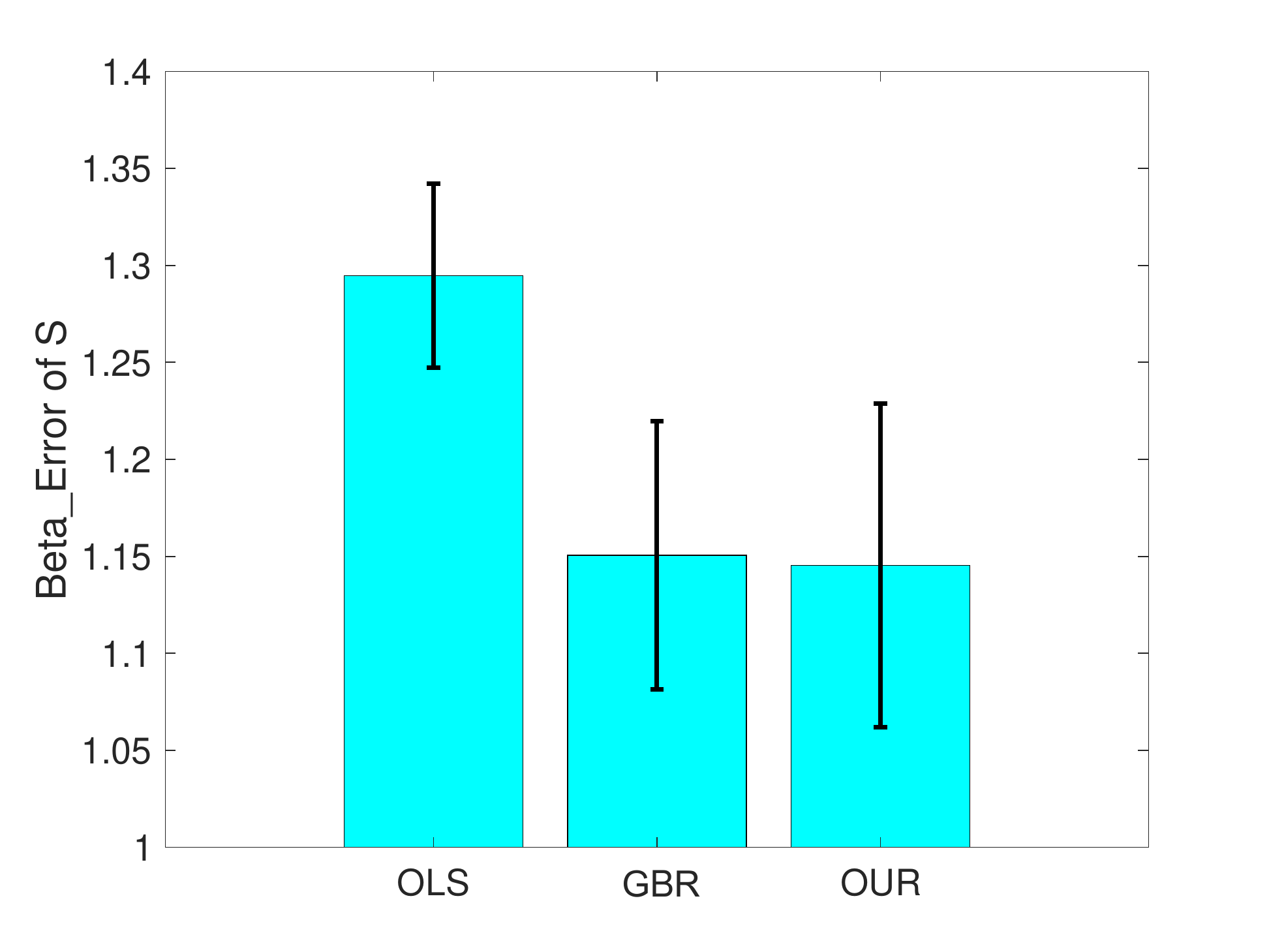}
    }
    \subfloat[$\beta$\_Error of $\mathbf{V}$\label{fig:s0v_beta_V}]{
      \includegraphics[width=1.5in]{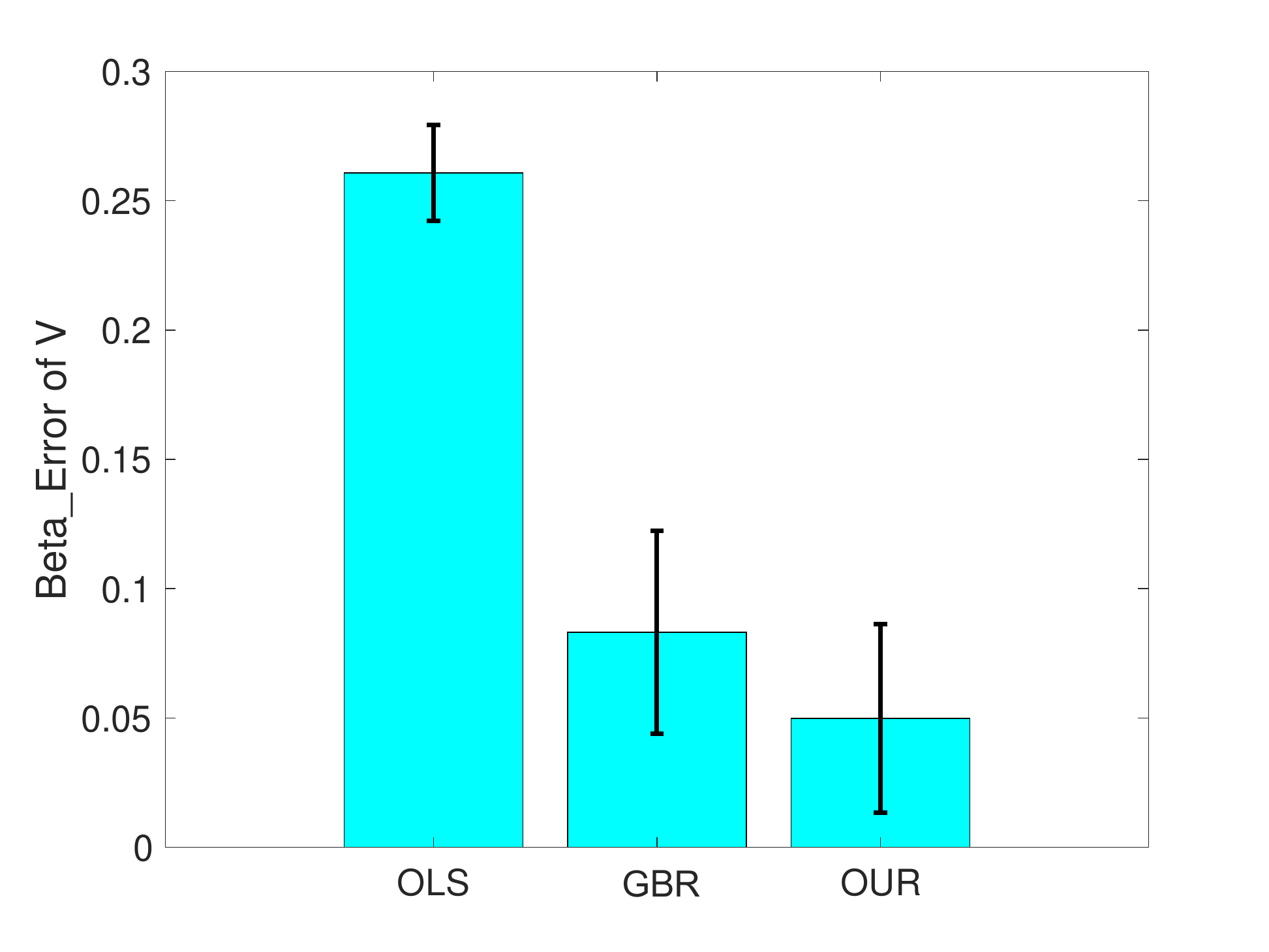}
    }
    \subfloat[RMSE over $r$ \label{fig:s0v_rmse}]{
      \includegraphics[width=1.5in]{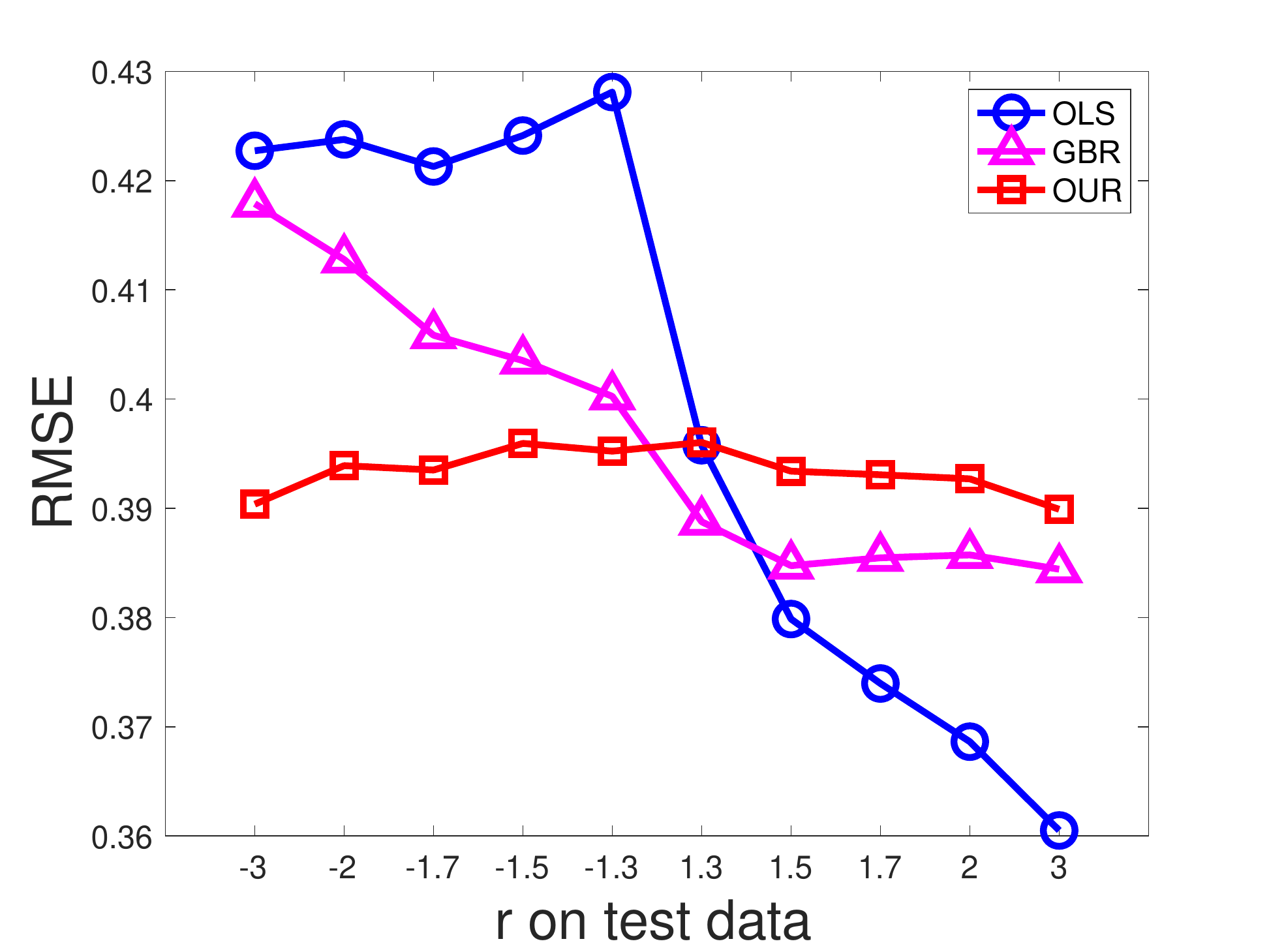}
    }
    \subfloat[Average\_Error (bar) $\&$ Stability\_Error (line) \label{fig:s0v_stability}]{
      \includegraphics[width=1.5in]{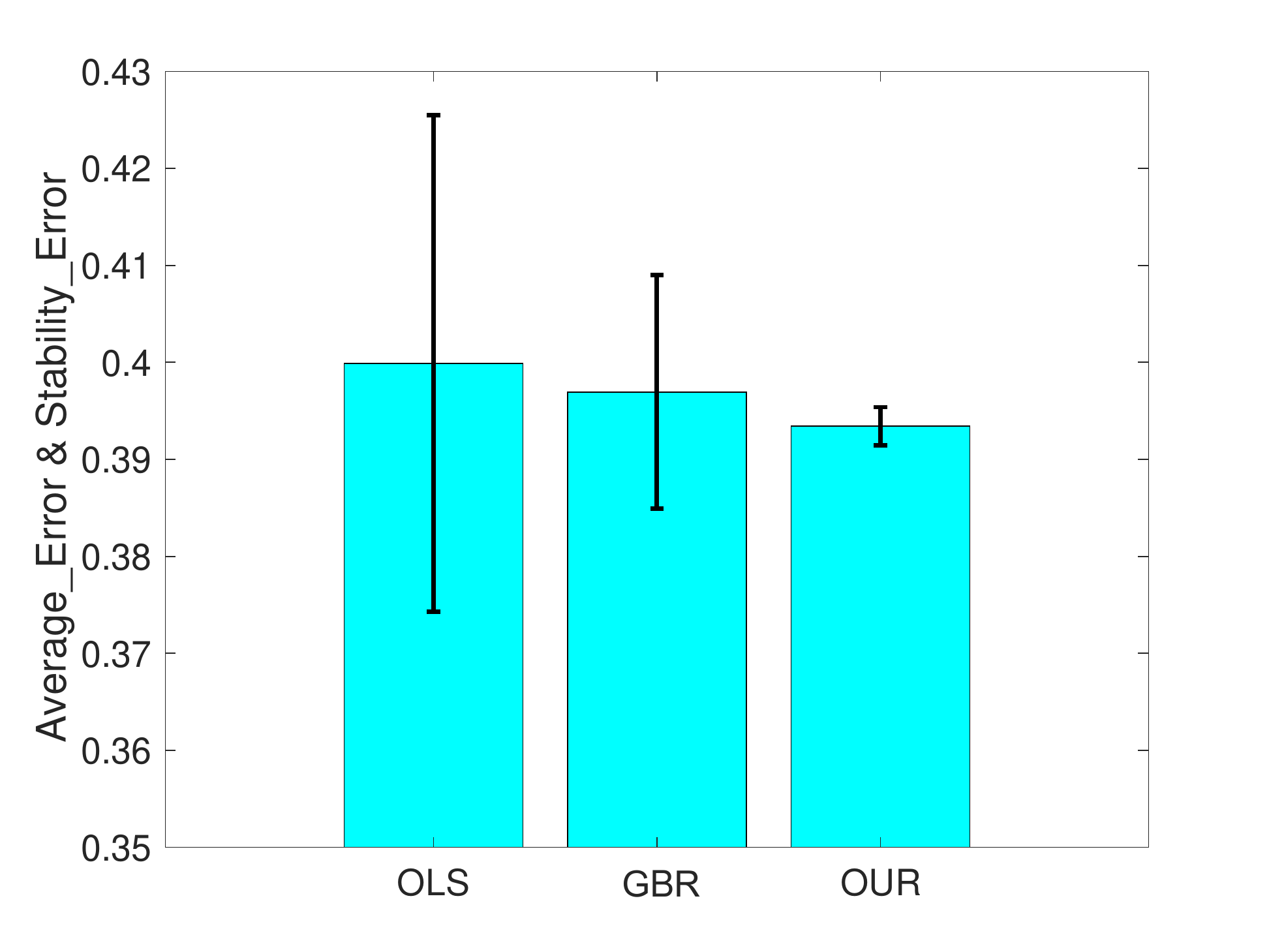}
    }
    \vspace{-0.05in}
    \caption{Results of regression. All the models are trained with $n=2000$, $d=10$ and $r_{\rm train} = 2.0$.}
    \label{fig:s0v}
    \vspace{-0.1in}
    \end{figure*}
    \begin{figure*}[tb]
    \centering
    \vspace{-0.2in}
    \subfloat[\scriptsize{$r_{\rm train}=0.15$} \label{fig:C_RMSE_128-015}]{
      \includegraphics[width=1.5in]{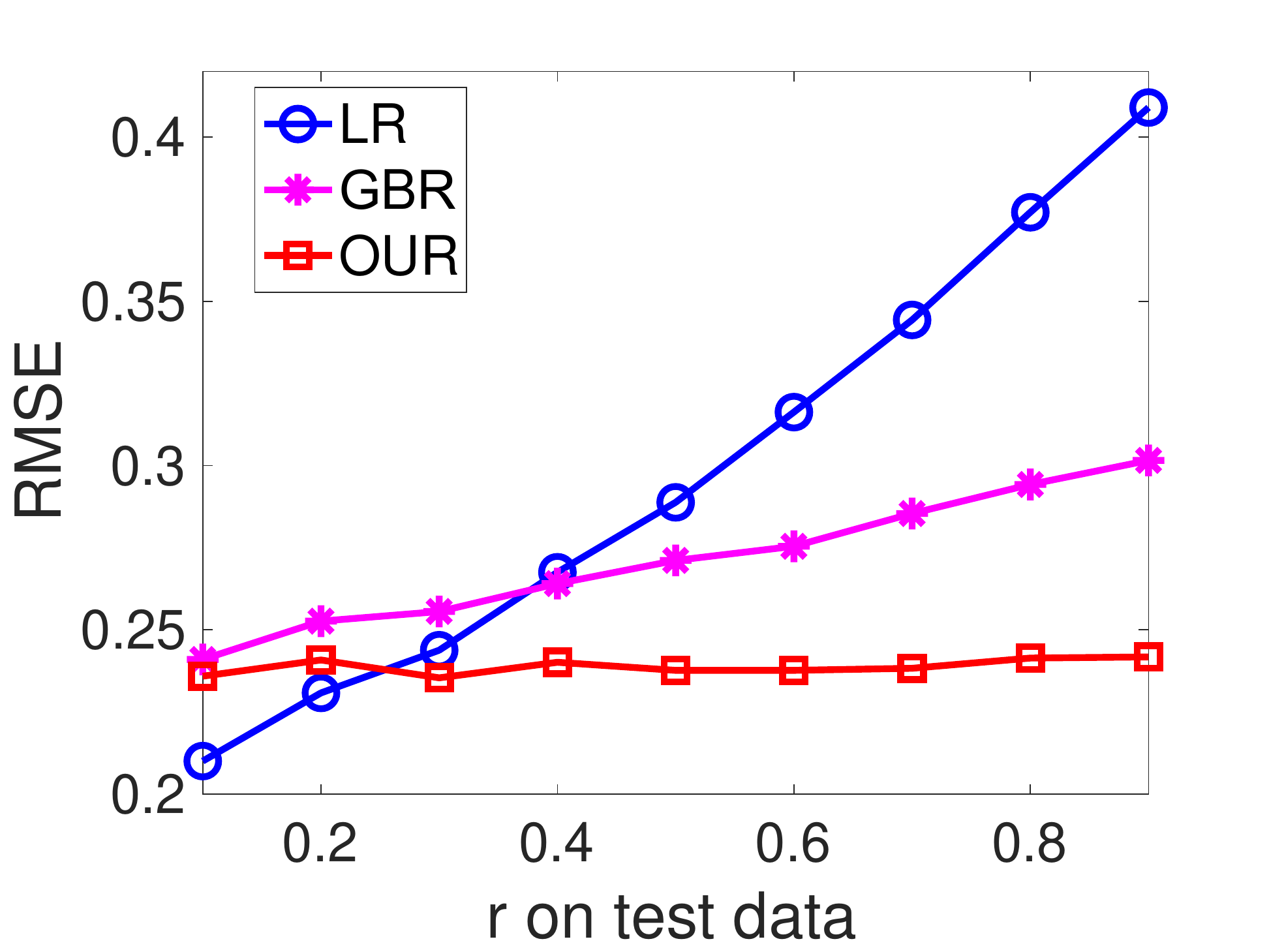}
    }
    \subfloat[\scriptsize{$r_{\rm train}=0.25$}\label{fig:C_RMSE_128-025}]{
      \includegraphics[width=1.5in]{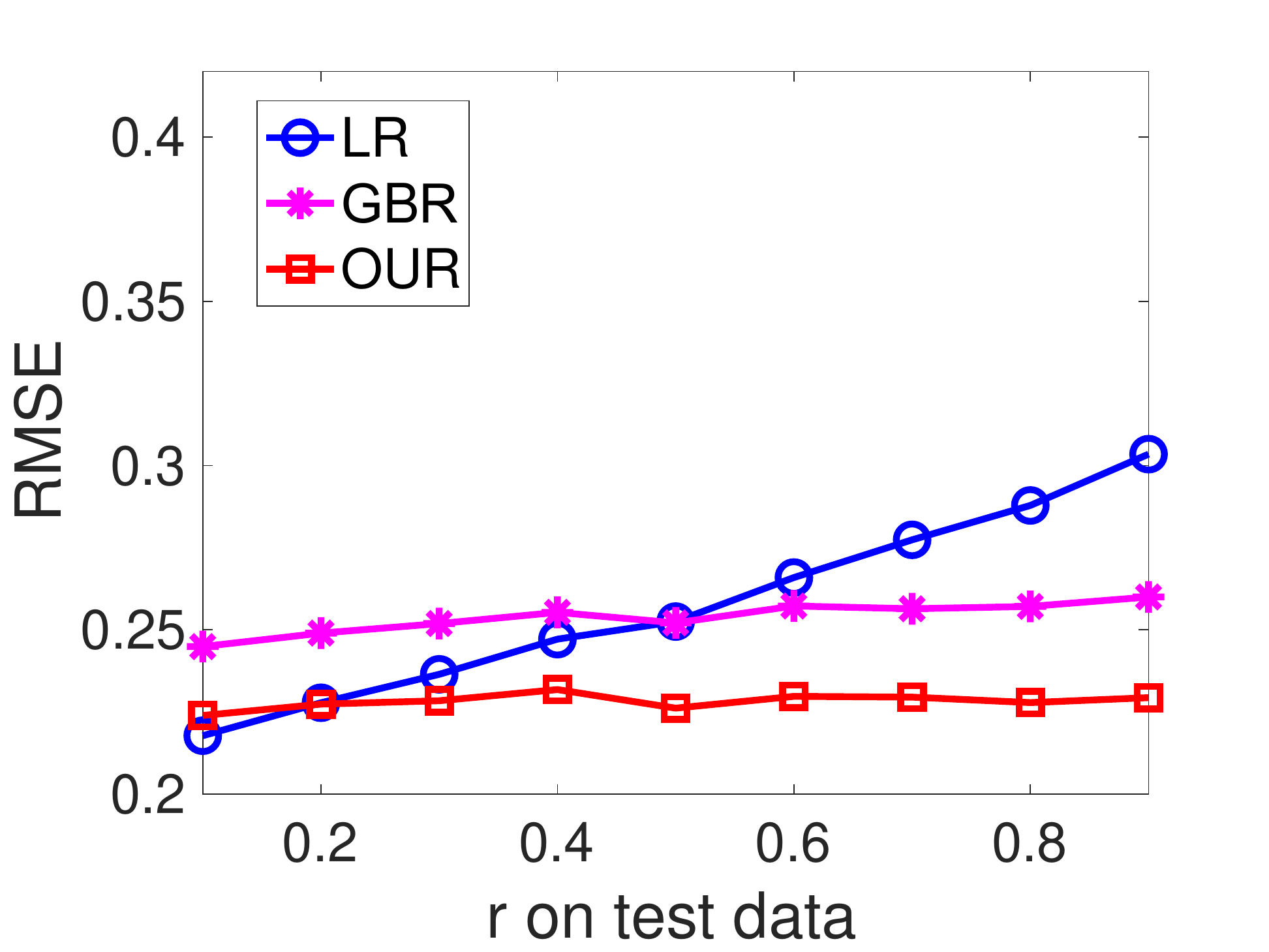}
    }
    \subfloat[\scriptsize{$r_{\rm train}=0.75$}\label{fig:C_RMSE_128-075}]{
      \includegraphics[width=1.5in]{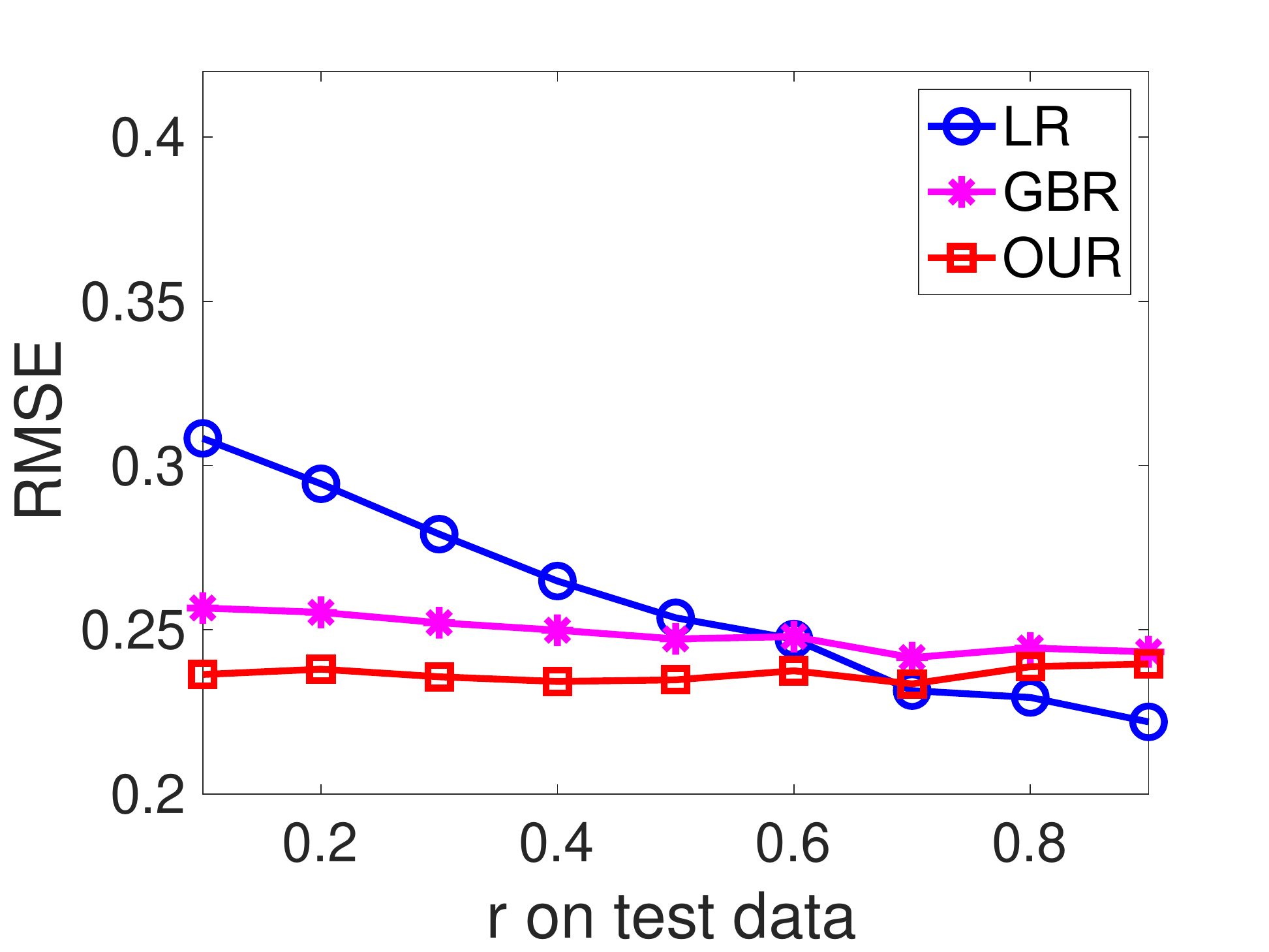}
    }
    \subfloat[\scriptsize{$r_{\rm train}=0.85$}\label{fig:C_RMSE_128-085}]{
      \includegraphics[width=1.5in]{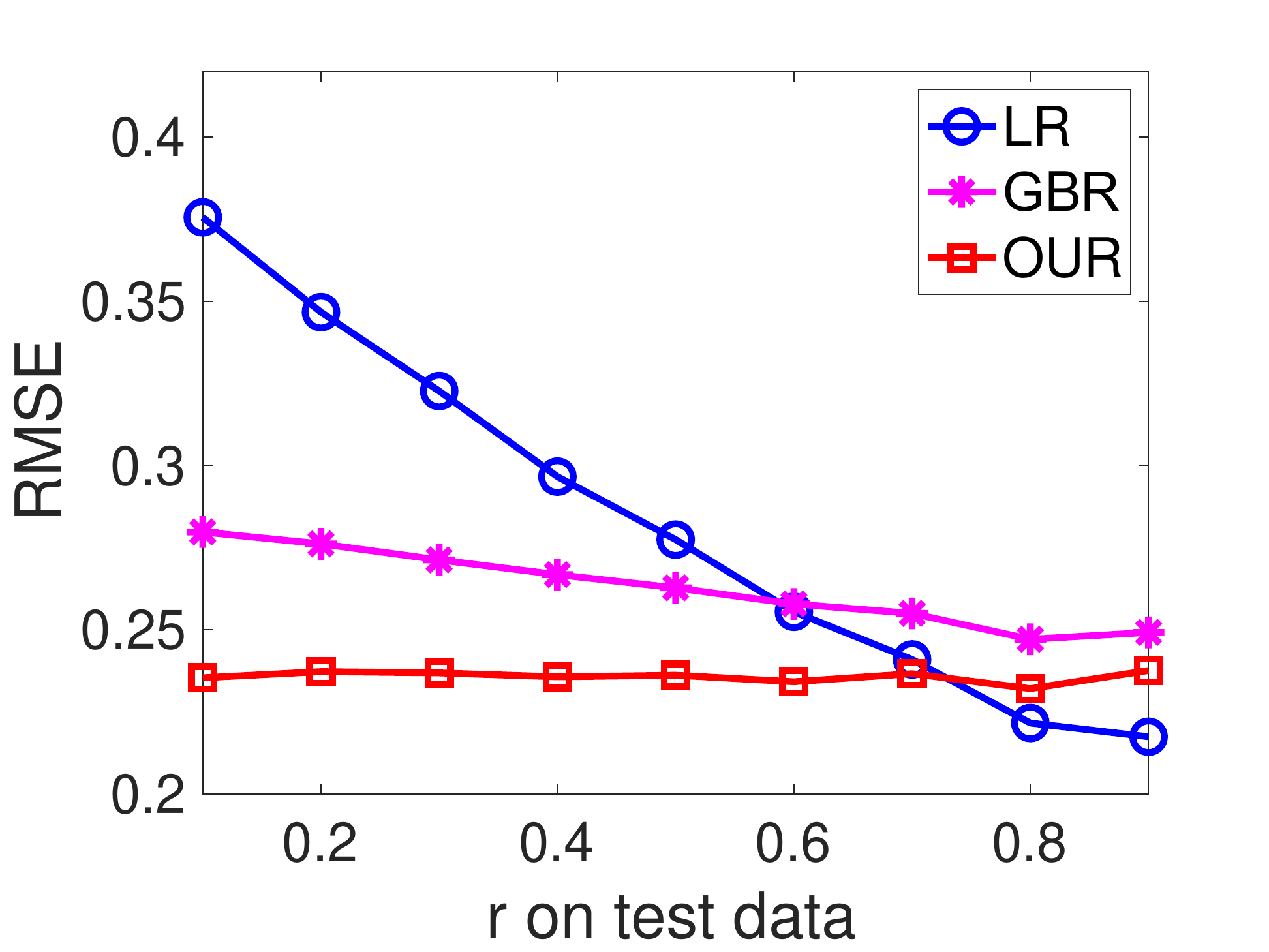}
    }
    \vspace{-0.05in}
    \caption{Results of classification on various test datasets by varying $r_{\rm train}$.}
    \label{fig:simulation_classfication}
    \vspace{-0.1in}
    \end{figure*}
    
    \subsubsection{Stable Prediction for Classification Task}
   %\vspace{-0.25cm}
    \paragraph{Datasets.}
    The covariates are simulated from the same process in the regression task. But we generate binary response variable $Y$ from the function as follows:
    \begin{align}
    \label{data_X}
    \nonumber 
    \Scale[1.0]{Y=1/(1+\exp(-\sum_{i=1}^3\alpha_i\cdot \mathbf{S}_{\cdot,i}
    -5\cdot \mathbf{S}_{\cdot,4}\cdot \mathbf{S}_{\cdot,5})) + \mathcal{N}(0,0.2^2)}.
    \end{align}
    We define $\alpha_i = (-1)^{i}\cdot (\mbox{mod}(i,3)+1)\cdot d/3$, where function $\mbox{mod}(x,y)$
    returns the modulus after the division of $x$ by $y$. To make $Y$ binary, we set $Y=1$ when $Y\geq0.5$, otherwise $Y=0$. Furthermore, different environments are generated by varying $P(Y|\mathbf{V}_5)$ with a biased sampling rate $r\in(0,1)$. Specifically, we select a sample with probability $r$ if $\mathbf{V}_{5}=Y$; otherwise, we select it with probability $1-r$, where $r>0.5$ corresponds to a positive correlation between $Y$ and $\mathbf{V}$.
    And larger $r$ leads to strongr correlation. $n=2000$ samples are generated after selection.
    
    \vspace{-0.35cm}
    \paragraph{Results.}
    In our experiments, we generate different synthetic data by varying bias rate $r_{\rm train} \in \{0.15,0.25,0.75,0.85\}$.  For each $r_{\rm train}$, we set $r_{\rm test}\in\{0.1,0.2,\dots,1\}$. The averaged RMSE of 50 replications on each training data is visualized in Fig. \ref{fig:simulation_classfication}.  As we can see, the suspicious correlation between $\mathbf{V}_5$ and $Y$ can improve the performance of baselines when $r_{\rm train}\approx r_{\rm test}$, but it causes instability of prediction to LR or GBR when the training and test environments are quite different. 
    Compared with baselines, the RMSE of our BSSP algorithm is consistently stable and small across different environments. 
    Moreover, BSSP makes greater improvements when $r$ is farther from $0.5$. 
    
    \subsubsection{Pearson Correlation Coefficients}
          \begin{figure*}[tb]
          \centering
          \vspace{-0.1in}
          \subfloat[On raw data \label{fig:R_Corr_OLS}]{
            \includegraphics[width=1.9in]{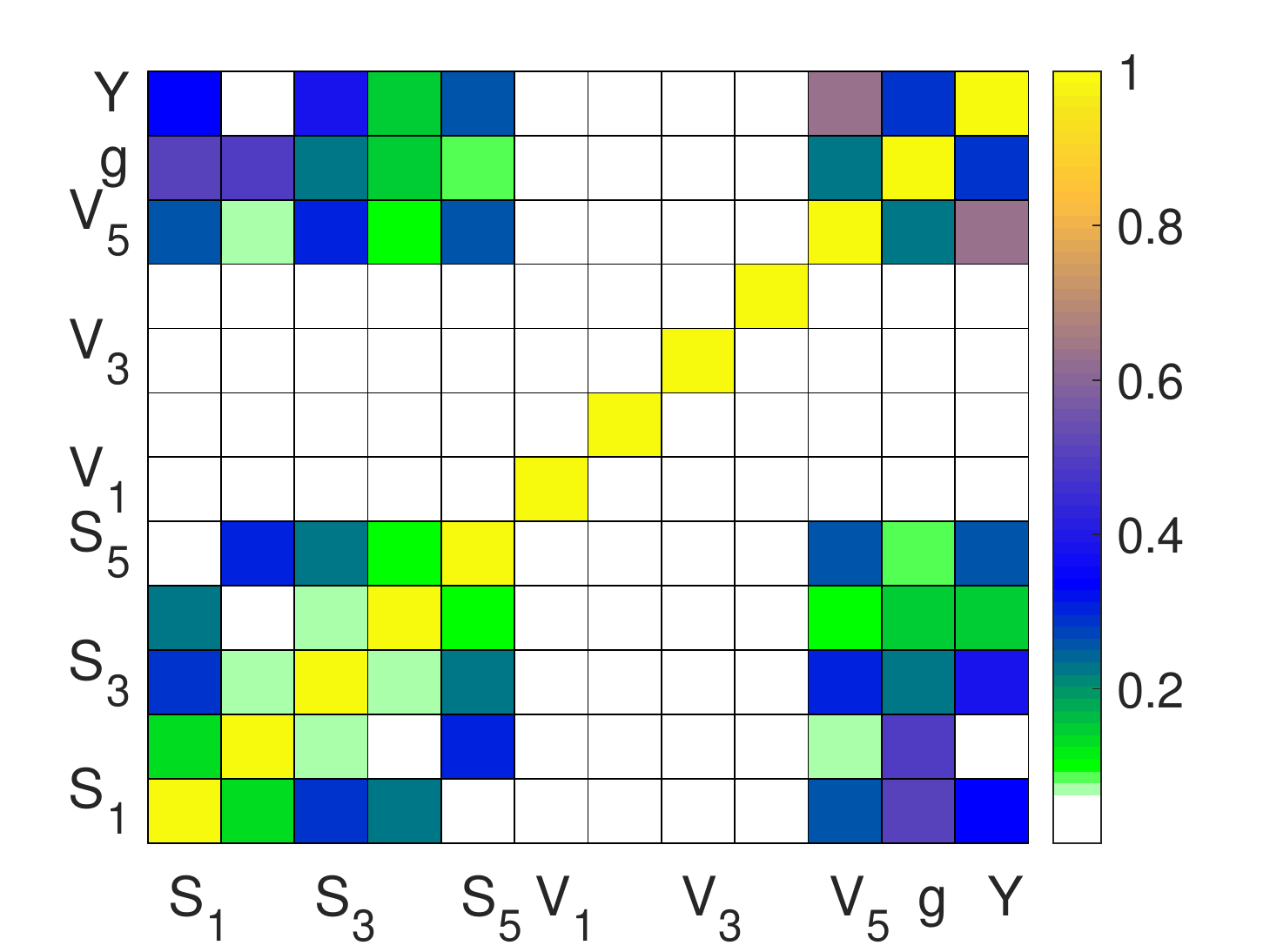}
          }
          \ \ 
          \subfloat[On the weighted data from GBR\label{fig:R_Corr_BAL}]{
            \includegraphics[width=1.9in]{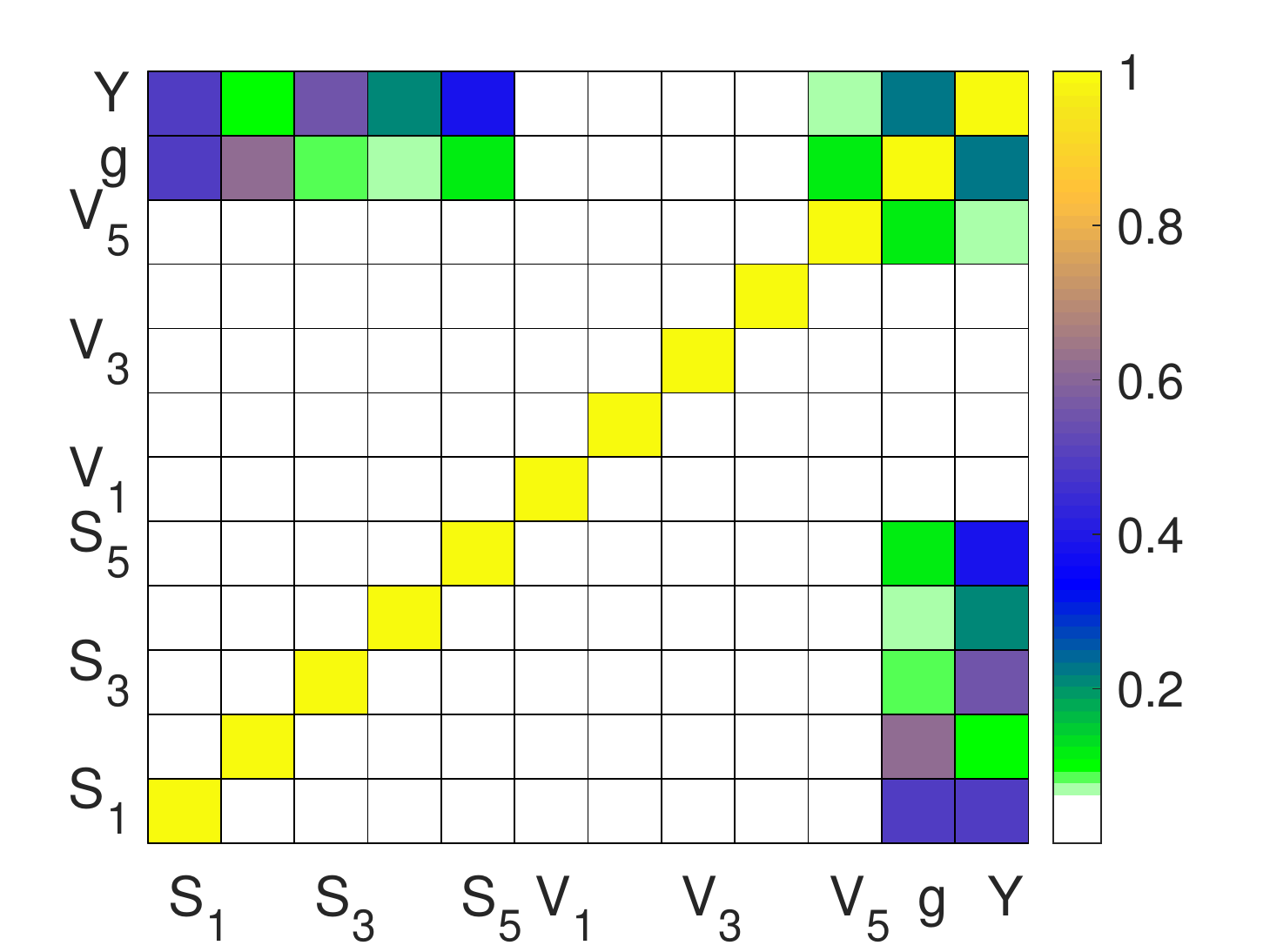}
          }
          \ \ 
          \subfloat[On subsampled data from BSSP\label{fig:R_Corr_SUB}]{
            \includegraphics[width=1.9in]{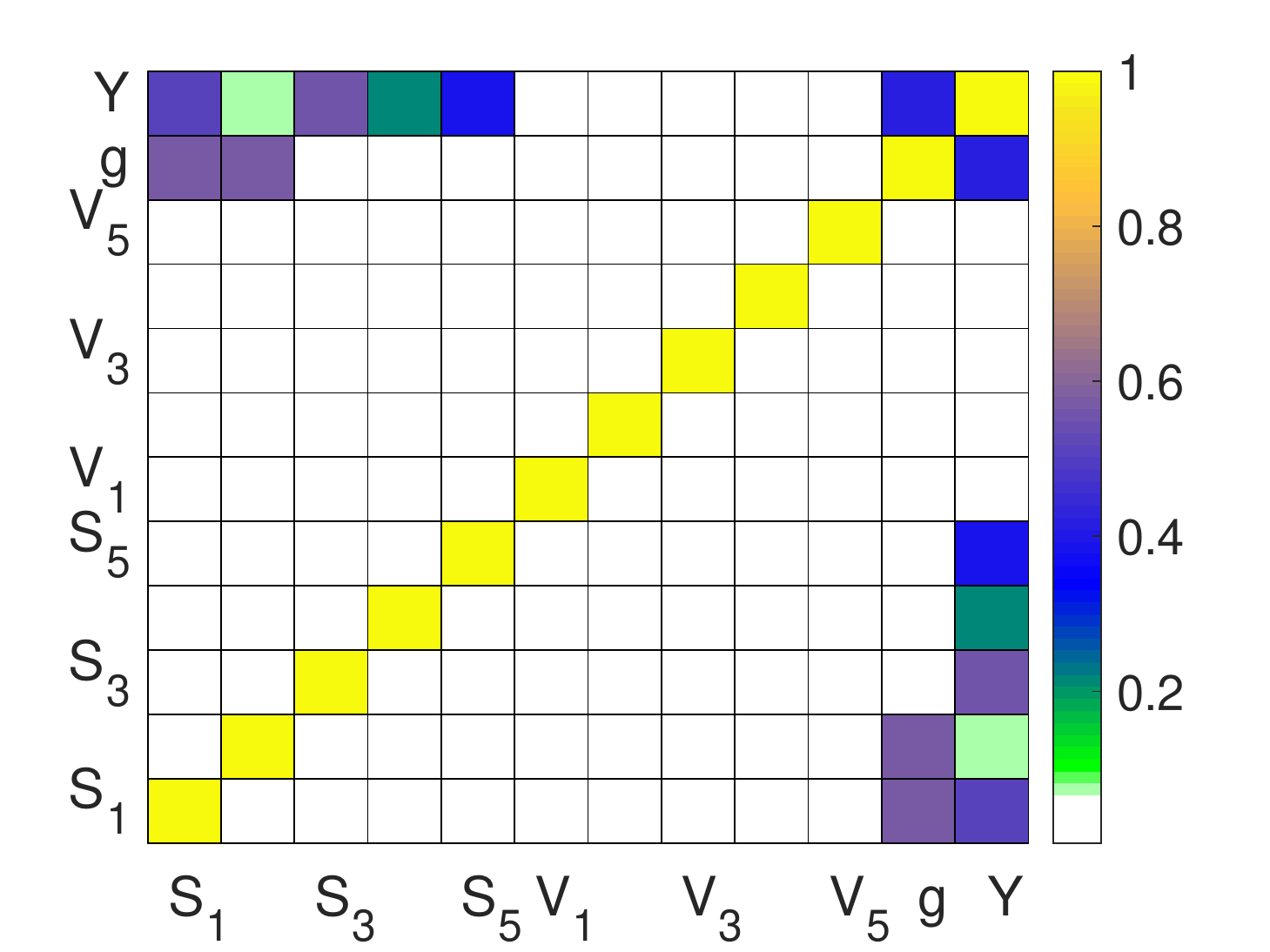}
          }
          \vspace{-0.1in}
          \caption{Pearson correlation coefficients among variables: a) on raw data; b) on weighted data from global balancing; c) on subsampled data from BSSP.}
          \label{fig:correlation}
          \vspace{-0.1in}
          \end{figure*}
          
          Under the synthetic regression setting, we compare the Pearson correlation coefficients between any two variables of various approaches in Fig. \ref{fig:correlation}. This experiment demonstrates how different methods remove the confounding effects among variables. 
          
          From the result, we can find that in the raw data (Fig. \ref{fig:R_Corr_OLS}), the noisy feature $\mathbf{V}_5$ is correlated with some stable features $\mathbf{S}$, the nonlinear term $g$ and outcome $Y$.
          Hence, the estimated coefficient of $\mathbf{V}_5$ in OLS would be large and thus leads to unstable prediction.
          In the weighted data by GBR (Fig. \ref{fig:R_Corr_BAL}), the sample weights learned from global balancing can clearly remove the correlation between noisy feature $\mathbf{V}_5$ and stable features $\mathbf{S}$. But $\mathbf{V}_5$ is still correlated with both omitted nonlinear term $g$ and outcome $Y$, leading to imprecise estimation on $\beta_V$ and unstable prediction. The main reason is that the global balancing method only considers first-order confounding effects while ignoring the higher-order effect between $\mathbf{V}_5$ and $g$.
          In BSSP (Fig. \ref{fig:R_Corr_SUB}), we can find that not only the correlation among predictors $\mathbf{X}$, but also the one between $\mathbf{V}_5$ and $g$ as well as $Y$ is clearly removed. Hence, our BSSP algorithm can estimate the coefficient of both $\mathbf{S}$ and $\mathbf{V}$ more precisely. 
          This is the key reason that BSSP can make more stable predictions across unknown test environments.
          
   \subsection{Real Datasets}
    To check the performance of the BSSP algorithm on the real-world data. We apply it to a WeChat advertising dataset (classification) and Parkinson's telemonitoring data (regression), respectively. The detailed descriptions of datasets and preprocessing are listed in what follows.
    \vspace{-0.35cm}
    \paragraph{WeChat Advertising Dataset.}
     It is a real online advertising dataset, which is collected from Tencent WeChat App\footnote{http://www.wechat.com/en/} during September 2015 and used in \cite{kuang2018stable} for stable prediction.
     In WeChat, each user can share (receive) posts to (from) his/her friends like Twitter and Facebook.
     Then the advertisers could push their advertisements to users, by merging them into the list of the user's wall posts.
     For each advertisement, there are two types of feedbacks: ``Like'' and ``Dislike''.
     When the user clicks the ``Like'' button, his/her friends will receive the advertisements.
     
     The WeChat advertising campaign used in our paper is about the LONGCHAMP handbags
     for young women.\footnote{http://en.longchamp.com/en/womens-bags}
     This campaign contains 14,891 user feedbacks with Like and 93,108 Dislikes.
     For each user, we have their features including (1) demographic attributes, such as age, gender,
     (2) number of friends, (3) device (iOS or Android), and (4) the user settings on WeChat, for example, whether allowing strangers
     to see his/her album and whether installing the online payment service.
     
     In our experiments, we set $Y_i=1$ if user $i$ likes the ad, otherwise $Y_i=0$. For non-binary features, we dichotomize them around their mean value. And we only preserve users' features which satisfied $0.2 \leq\frac{\#\{x=1\}}{\#\{x=1\}+\#\{x=0\}} \leq 0.8$. Finally, our dataset contains 10 binary user features as predictors and user feedback as the outcome variable. To test the stability of all methods, we separate the whole dataset into 4 parts by users' age, including $\mbox{Age} \in [20,30)$, $\mbox{Age} \in [30,40)$, $\mbox{Age} \in [40,50)$ and $\mbox{Age} \in [50,100)$. All models are trained with data from the environment $\mbox{Age} \in [20,30)$ but are tested on all 4 environments.
     
    \vspace{-0.35cm}
    \paragraph{Parkinson's Telemonitoring Dataset.}
    To test BSSP in a regression setting, we apply it to a Parkinson's telemonitoring dataset\footnote{\url{https://archive.ics.uci.edu/ml/datasets/parkinsons+telemonitoring}}, which has been wildly used for domain generalization \citep{muandet2013domain,blanchard2017domain} task and other regression task \citep{tsanas2009accurate}. The dataset is composed of biomedical voice measurements from 42 patients with early-stage Parkinson's disease. For each patient, there are around 200 recordings, which were automatically captured in the patients' homes. The aim is to predict the clinician's motor and total UPDRS scores of Parkinson's disease symptoms from patients' features, including their age, gender, test time and many other measures.
          
    In our experiments, we alternately set the outcome variables $Y$ as motor UPDRS scores and total UPDRS scores. For those non-binary features, we dichotomize them around their mean value. Finally, we selected 10 patients features as predictors $\mathbf{X}$, including age, gender, test time, Jitter:PPQ5 (a measure of variation in fundamental frequency), Shimmer:APQ5 (a measure of variation in amplitude), RPDE (a nonlinear dynamical complexity measure), DFA (signal fractal scaling exponent), PPE (a nonlinear measure of fundamental frequency variation), NHR and HNR (two measures of the ratio of noise to tonal components in the voice).
          
    To test the stability of all algorithms, we separate the whole 42 patients into 4 patients' groups, including PG1 with recordings from 21 patients, and the other three groups (PG2, PG3, and PG4) are all with recordings from different 7 patients. We train models with data from environment PG1, but test them on all 4 environments.
          
    \vspace{-0.35cm}
    \paragraph{Results.} 
    We visualize the results in Figure \ref{fig:real_data}. From the figure, we can obtain that the proposed BSSP algorithm achieves comparable results to the baseline OLS/LR on training environment. On the other three test environments, whose distributions differ from the training environment, BSSP achieves the best prediction performance.
    Another important observation is that the performance of our algorithm is always better than the global balancing method. The main reason is that GBR, unlike our BSSP algorithm, cannot address the high-order confounding among variables.%, while our BSSP algorithm can as guaranteed by Theorem \ref{theo:loss_optimal}.
    
    \vspace{-0.1in}
    \begin{figure}[t]
    \vspace{-0.1in}
    \centering
    \subfloat[Wechat data\label{fig:real_classification}]{
      \includegraphics[width=1.9in]{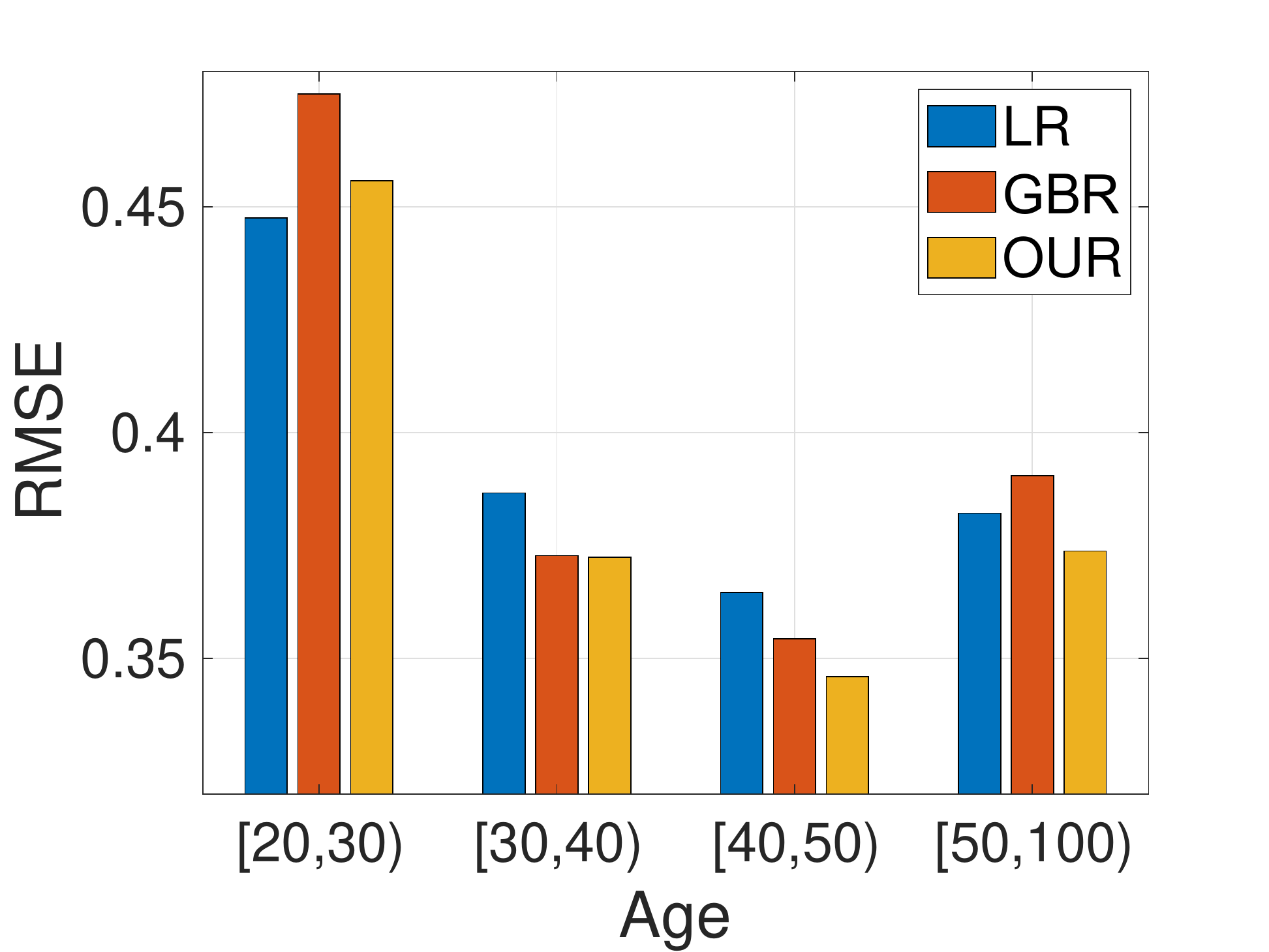}
    }
    \ \
    \subfloat[Parkinson data with $Y$ as motor UPDRS score \label{fig:real_regression_Y_motor_RMSE}]{
      \includegraphics[width=1.9in]{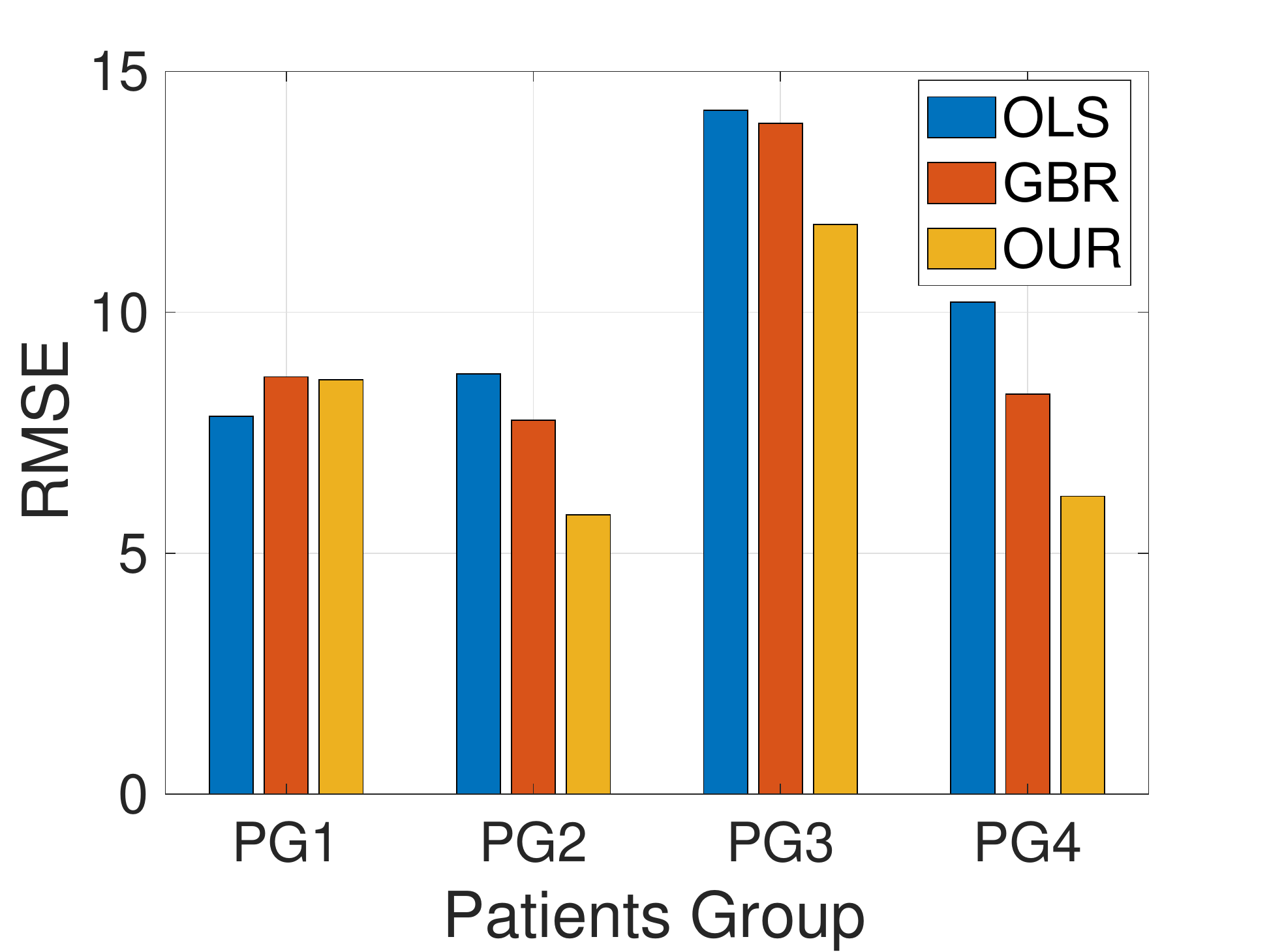}
    }
    \ \
    \subfloat[Parkinson data with $Y$ as total UPDRS score \label{fig:real_regression_Y_total_RMSE}]{
      \includegraphics[width=1.9in]{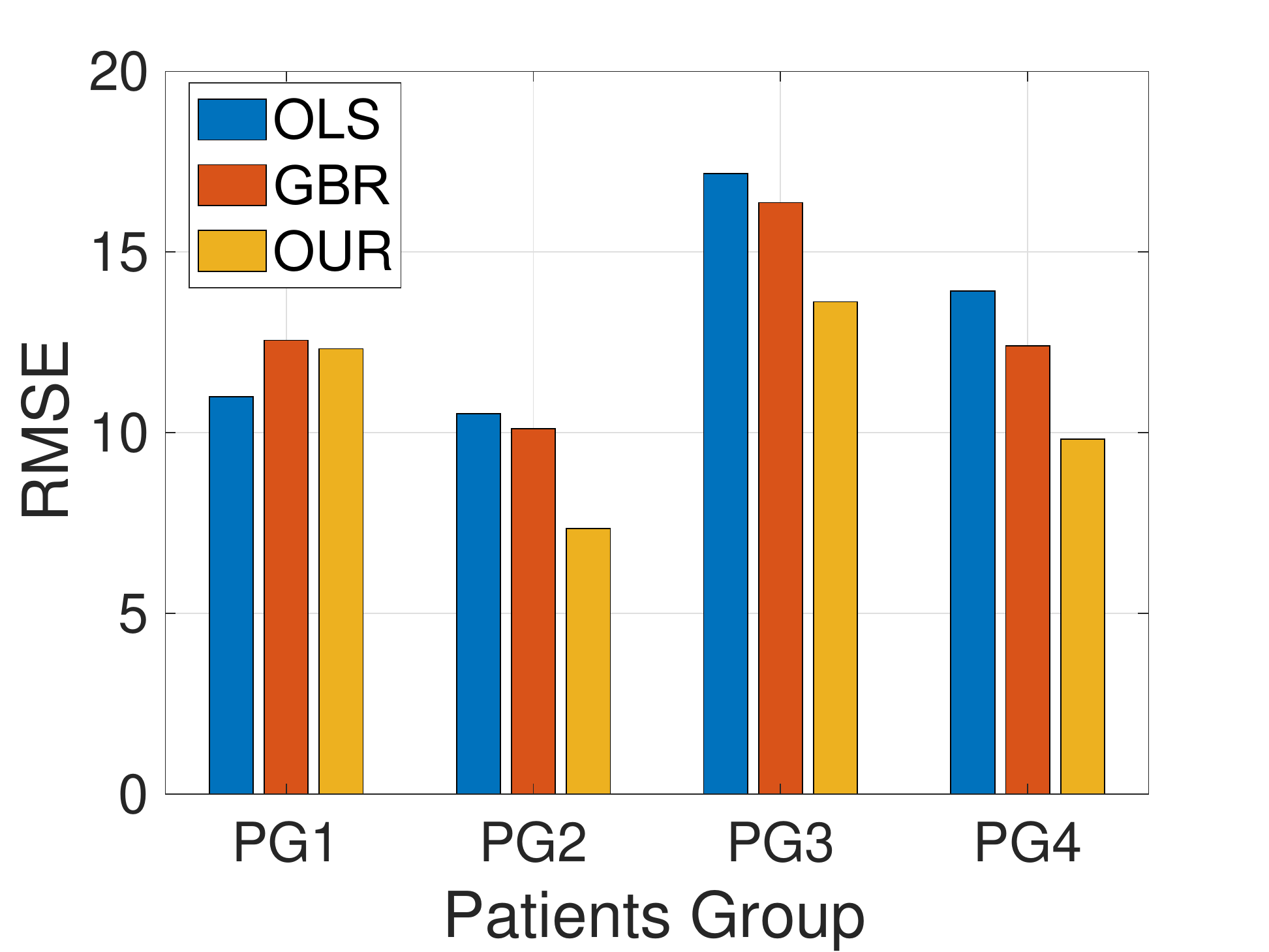}
    }
    \vspace{-0.05in}
    \caption{Prediction across environments of each method in different datasets. The models are all trained on the first environment, but tested across different groups. }
    \label{fig:real_data}
    \vspace{-0.1in}
    \end{figure}
    
    \section{Conclusion}
    This paper addresses the problem of stable prediction across unknown environments. We propose a subsampling method to reduce the spurious correlation between noisy features and the outcome variable.
    The subsampling method uses fractional factorial design as a matching template, which can promote the non-confounding properties among predictors if one can find a subdata to fully match the design. We also propose a new confounding measure to guide the subsample selection in general situations. Our method can be regarded as a data pre-treatment so that it can be applied to different prediction tasks, such as regression and classification. Extensive experiments on both synthetic and real-world datasets have clearly demonstrated the advantages of our proposed method for stable prediction.
    Our future work will focus on the subsampling based on $k$-level fractional factorial designs and space-filling designs, aiming to address the stable prediction problem with multi-level and continuous variables.

    %% `Elsevier LaTeX' style
    \clearpage
    \bibliographystyle{plainnat}
    %%%%%%%%%%%%%%%%%%%%%%%
    \bibliography{stable-prediction}

\end{document}